\renewcommand{\epsilon}{\varepsilon}
\newcommand{\eps}{\varepsilon}
\newcommand{\calX}{\mathcal{X}}
\newcommand{\calD}{\mathcal{D}}
\newcommand{\calC}{\mathcal{C}}
\newcommand{\calA}{\mathcal{A}}
\newcommand{\cost}{\Phi}
\newcommand{\disp}{\text{Disp}}
\newcommand{\R}{\mathbb{R}}
\newcommand{\E}{\mathbb{E}}
\DeclareMathOperator{\argmin}{argmin}
\theoremstyle{plain}
\newtheorem{thm}{Theorem} %[section]
\newtheorem*{thm*}{Theorem}
\newtheorem{definition}[thm]{Definition}
\newtheorem{rem}[thm]{Remark}
\newtheorem{assumption}[thm]{Assumption}
\newtheorem{lem}[thm]{Lemma}
\newcommand{\ERCLogo}{%
\begin{wrapfigure}{r}{2.1cm}
        \centering
        \vspace{-0.5cm}
        \includegraphics[width=2cm]{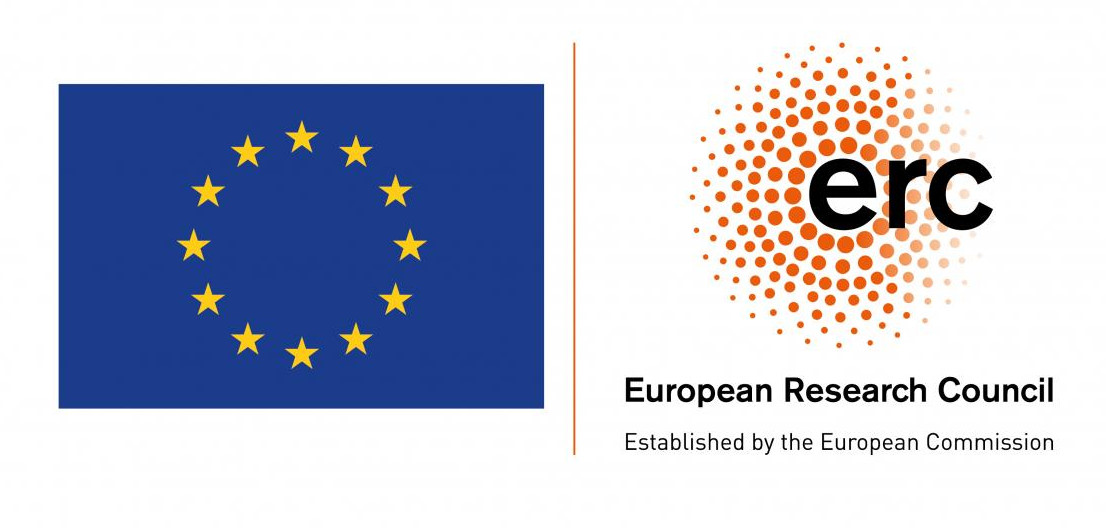}
        % \vspace{-1cm}
\end{wrapfigure}
}
\date{}
\author{
Kyriakos Axiotis \\ Google Research \and {Vincent Cohen-Addad} \\ Google Research
\and  Monika Henzinger\footnote{ 
  Funded by the  European Research Council under the European Union's Horizon 2020 research and innovation  programme, ERC grant no.~788183, ``Alpha Shape Theory Extended (Alpha)'',
  by the Wittgenstein Prize, FWF grant no.~Z~342-N31, 
  and by the DFG Collaborative Research Center TRR~109, 
  %% `Discretization in Geometry and Dynamics', 
  FWF grant no.~I~02979-N35.
  M.Henzinger %\flagpic{LOGO_ERC-FLAG_EU_.jpg}
  received funding by the European Research Council under the European Union's Horizon 2020 research and innovation programme, ERC grant no.~101019564,   \ERCLogo
  ``The Design of Modern Fully Dynamic Data Structures (MoDynStruct)'',
  and by the Austrian Science Fund through the Wittgenstein Prize with FWF grant no.\ Z 422-N, and also by FWF grant no.~I~5982-N, and
  by FWF grant no.~P~33775-N, with additional funding from the \textit{netidee SCIENCE Stiftung}, 2020--2024.}\\ ISTA \and Sammy Jerome\\ Google Research \and Vahab Mirrokni \\ Google Research\and David Saulpic\footnote{Work done while at IST Austria (ISTA). D. Sauplic has received funding from the European Union’s Horizon 2020 research and innovation
programme under the Marie Skłodowska-Curie grant agreement No 101034413.} \\ CNRS, IRIF \and David Woodruff\footnote{Work done while at Google Research in NYC.} \\ Carnegie Mellon University\and Michael Wunder\\ Google Research}
 \title{Data-Efficient Learning via Clustering-Based Sensitivity Sampling: Foundation Models and Beyond}
\begin{document}
\maketitle

\begin{abstract}

We study the data selection problem, whose aim is to
select a small representative subset of data that can 
be used to efficiently train a 
machine learning model. We present a new data selection
approach based on $k$-means clustering and sensitivity sampling. Assuming access to an embedding 
representation of the data with respect to which 
the model loss is H\"older continuous,
our approach provably allows selecting a set of 
``typical'' $k + 1/\eps^2$ elements whose average loss 
corresponds to the average loss of the 
whole dataset, up to a multiplicative $(1\pm\epsilon)$ factor and an additive $\eps \lambda \Phi_k$, where $\Phi_k$ represents the $k$-means cost for the input 
embeddings and $\lambda$ is the H\"older constant. 

We furthermore demonstrate the performance and scalability of our approach on fine-tuning foundation models and show that it outperforms state-of-the-art methods. We also show how it can be applied on 
linear regression, leading to a new sampling strategy 
that surprisingly matches the performances of leverage 
score sampling, while being conceptually simpler and more
scalable.
\end{abstract}

\section{Introduction}
The growth of both datasets and models to a massive
scale has led to 
a new generation of machine learning models with astonishing performance. Yet, 
the size of these models and datasets
makes their training and fine-tuning extremely difficult, costly, time-consuming, and so nearly impossible for most academic institutions
or small-scale companies to perform. On the other hand, 
a complete dataset is often not needed to reach nearly optimal performance (i.e., up to a small increase in error percentage). A central question then becomes how to identify the most important 
data items for the training or fine-tuning process. 

While uniform sampling often shows surprisingly good
performance, it is still suboptimal, especially when
dealing with real datasets that are complex and imbalanced.
To better capture the usefulness of the underlying 
data to train the model, data selection and active learning 
methods deduce which data items are the most relevant 
for training or fine-tuning, based on their uniqueness, 
quality, and the model's knowledge.
There exist several heuristics or greedy approaches for 
active learning and data selection~(see e.g. \cite{dasgupta2004analysis} or references in \cite{ren2021survey}).
State-of-the-art data selection strategies are uncertainty-based, e.g., 
\emph{margin} or \emph{entropy} scores, and aim at selecting items for which the uncertainty of the model is high. However, such purely model-based methods have the
additional overhead of requiring evaluating the model on 
the whole dataset.

In a celebrated result,  Sener and Savarese~\cite{SenerS18}
 showed that state-of-the-art active learning strategies are difficult to use
in modern training frameworks for the following reasons:
\begin{enumerate}
\item The training proceeds in batches, which requires
the active learning strategy to pick \emph{not only one} 
training element at a time but a \emph{batch} of training elements. However, to make the most out of the batch, it is needed to ensure some diversity 
in the set of elements sampled, which often unfortunately anti-correlates with, for example, the margin objective  which may lead to picking near-duplicates elements
(see the discussion in \cite{SenerS18} for more details).
\item The score (i.e., value) of the training elements 
are obtained through the model. This requires running the 
model on the data items to determine which ones to pick 
next. Unfortunately, modern models are often very large and 
the inference time is particularly costly. Moreover, margin scores are not 
well suited for foundation models. 
\end{enumerate}

The solution proposed by~\cite{SenerS18} consists of using the notion of \emph{coresets} to select the data. A coreset is a subset of the data defined
such that optimizing the model on the coreset yields a 
good model for the entire dataset (i.e., good generalization
bounds for the whole dataset). In more formal terms, the average model's gradient (or model loss) of the coreset elements is the same as the model's gradient of the 
whole dataset and, thus, learning on the coreset elements has the same effect as learning on the whole dataset. Unfortunately, to implement this approach one would need to obtain the gradient or loss of \emph{all} the data items, which implies running the model on all the data items. To circumvent this problem, Sener and Savarese show that, given some embedding representation of the dataset and some set of assumptions relating the embeddings to the model's loss, some coreset can be computed using a heuristic to the $k$-center 
objective. Their embedding assumption is a fairly natural one
since the embeddings can be obtained from a pre-trained model
or from
a generic embedding model (e.g., BERT~\cite{devlin2018bert}, word2vec~\cite{mikolov2013efficient}).
This approach has, however, the following suboptimal behaviors:

%\begin{enumerate}
%    \item 
    (1) The first practical issue is that the $k$-center objective is particularly sensitive to outliers, and in particular the greedy 2-approximation algorithm in the work of \cite{SenerS18}. Indeed the algorithm iteratively picks
the training items that are the furthest away (in the embedding space) from the already selected training items. This tends to select outliers, increasing the diversity at the expense of the relevance of the elements. 
We ask: Can we find a more robust way of selecting
a set of items which is both diverse and that precisely covers
the most important traits of the dataset?
%\item 

(2) A second theoretical 
drawback is that the bounds proven are quite weak and require
strong assumptions on the relationship between the embeddings of the training elements and the model loss, in particular on the 
spread of the data elements (see Section~\ref{sec:ourmodel} for
more details). 
We ask: Can we provide a theoretical solution that would require
a minimal set of assumptions on our dataset and model?

%\item 
(3) Finally, and maybe most importantly, their approach is limited to classification tasks. We ask: Can we provide a more generic data-selection algorithm, working for a more general loss function and in particular for
the new generation of foundation models?
%\end{enumerate}

\subsection{Our Approach and Contribution}
Consider the problem of fine-tuning a 
Large Language Model (LLM) on a specialized task,
such as translation. Even though we have abundant
data points for the translation task, it is often 
time-consuming and costly to fine-tune on the whole 
translation dataset, and we would instead prefer to
sample a small representative subset of data that
can still be used to build a high quality model.
While there are methods that compute importance scores
for each data point (e.g. margin scores) that can then 
be used to select data, these scores are expensive to 
compute, since they require evaluating \emph{all} the 
data using the LLM.

Our key insight is to leverage such expensive, accurate scores on a \emph{sublinear} number 
of data points, coupled with less accurate but fast to compute 
embeddings, which can be generated by a much simpler
and efficient model. Surprisingly, we find that
even simple embeddings, such as those generated by a
pre-trained BERT model \cite{devlin2018bert,mikolov2013efficient}, can be
predictive of the loss affinity between different data 
points, for a much larger and more complex models (see 
also Figure~\ref{fig:loss_histogram}).
Our data selection algorithm utilizes clustering and sketching techniques, offering strong theoretical guarantees and significant practical improvements over existing methods on benchmark datasets.
The power of our theoretical contribution is that 
while the analysis is 
quite simple, it significantly improves over previous 
work, in particular over \citet{SenerS18}.

Ideally, we want to sample data proportional to the model loss. However, like margin or entropy scores, this is expensive due to requiring model evaluation on the entire dataset. Instead, we leverage embeddings to identify a diverse and relevant subset.
Our approach begins with $k$-means clustering on the entire dataset. Then, elements are sampled using \emph{sensitivity sampling} on \emph{proxy losses}.
Specifically, the algorithm first computes a $k$-means clustering on the whole dataset and then
samples each element with 
probability proportional to its distance to the closest mean
plus the mean's loss (see \cite{FL11} for the introduction of that probability distribution for clustering coreset, see also~\cite{bachem2018scalable}).  
% for each cluster we select the 
% point that minimize the sum of the distances to the other cluster 
% points. 

Here, we use a $(k, z)$-clustering objective (e.g., $k$-median for $z=1$ and 
$k$-means for $z=2$) because it provides a more robust clustering measure than 
$k$-center as it is much less sensitive to outliers.

In the experiments section~\ref{sec:experimental_results_nn}, we show that 
 this yields a better sample
in practice than what was previously known, in particular for fine-tuning
a foundation model (specifically for fine-tuning an LLM to a translation task).
We further show that our method is quite general as it works for both neural networks and for regression tasks.

Next, we provide theoretical guarantees on this sampling strategy. First, 
assuming there is a clustering such that, for each cluster, the model loss
is H\"older with respect to the embeddings -- a 
well-motivated assumption as we show in the experiments section and less restrictive than the Lipschitz assumption of \citet{SenerS18} -- we can prove that the samples provide 
a strong proxy for the loss of all the data elements. 
More specifically, we show that by using sensitivity sampling
% , meaning sampling each element with 
% probability proportional to its distance to the closest mean
% plus the mean's loss, 
we obtain a coreset whose average model loss is within a $(1\pm\eps)$ factor of the 
average model loss on the whole dataset, plus an additive term 
corresponding
to the loss of the $(k, z)$-clustering objective. This implies that 
if the embeddings of the data is clusterable, we obtain an actual coreset for the model loss with
only few {\em inferences}, i.e., queries to the loss function $\ell$.
Moreover, for classification tasks, expecting that the 
model embeddings will be clusterable is not an unrealistic 
assumption: we do expect that points from the same 
class have closer model-embedding distance than points in different classes. This intuition is validated in
Table~\ref{table:lambda}.

More formally, we work with the notion of Hölder continuity:
%, which generalizes Lipschitzness as follows: 
we say that the loss function $\ell$ is $(z, \lambda)$-Hölder continuous if for any $x, y$ with embedding $E(x)$ and $E(y)$, $|\ell(x) - \ell(y)|\leq \lambda \|E(x)-E(y)\|^z$.
We make the following assumption on the loss function:
%Formally, under the assumption that the loss function $\ell$ is $(z, \lambda)$-Hölder continuous, namely for all $x, y$, $|\ell(x) - \ell(y)|\leq \lambda \|x-y\|^z$, we get: 
 
\begin{assumption}\label{ass:lip}
For $\Lambda = (\Lambda_1, ..., \Lambda_k) \in \R^k$, an embedding $E$ and a $k$-clustering of the input $\calC$, we say the loss function is $(z, \Lambda)$-well-behaved with respect to  $E$ and  $\calC$ when, 
for any cluster $C_i$ and point $e \in C_i \cap S$, $|\ell(e) - \ell(c_i)| \leq \Lambda_i \|e - c_i\|^z.$
%restricted to cluster $i$, the loss function is $(z,\Lambda_i)$-Hölder continuous. 
\end{assumption}
Note that this definition generalizes Lipschitzness: if the loss function is $\lambda$-Lipschitz, then the above holds for $\Lambda_i = \lambda$ and $z=1$, regardless of the clustering $\calC$.

In Table~\ref{table:lambda}, we validate Assumption~\ref{ass:lip} on standard 
benchmark datasets. We cluster the data, and then
%for each cluster compute the percentiles of the ratio
 compute the percentiles of the ratio
$\left|\ell(e) - \ell(c_i)\right| / \left\|e-c_i\right\|^z$ (for MNIST $\ell$ is the loss function,
while for GAS it is the target variable).
We observe that these values are bounded,
implying that the MNIST and GAS datasets do possess the
$(z,\lambda)$-H\"older condition.
\begin{table}[h]
    \caption{Value of the H\"older continuous constant  for $z = 2$ the different
    percentiles of the MNIST (classification) and GAS (regression) datasets.}
    \label{table:lambda}
\centering
      \begin{tabular}{l|c|r}
      Smallest \%ile & $\lambda$ for MNIST & $\lambda$ for GAS \\ \hline
20 & 0.00111 & 0.02286\\ \hline
40 & 0.00320 & 0.04488 \\ \hline
60 & 0.00663 & 0.07158\\ \hline
80 & 0.01416 & 0.12589\\ \hline
99 & 0.05935 &  0.86976\\ \hline
      \end{tabular}
\end{table}

We also sanity check our assumption on LLMs. 
Specifically, we examine how predictive a BERT-based
embedding clustering is of the losses on a much larger
T5 transformer model. Figure~\ref{fig:loss_histogram}
shows that on average a data point's loss on the T5 
model is much closer to the loss of similarly clustered
points, than that of random points.
\begin{figure}[h]
\includegraphics[width=0.9\linewidth]{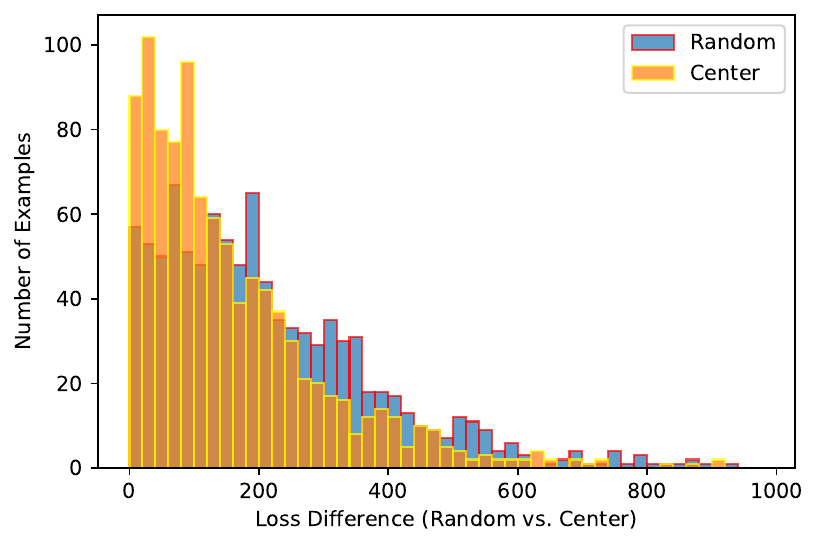}
\caption{Distribution of loss to random point vs center of corresponding cluster for the WMT T2T EnDe translation dataset  ~\cite{bojar-EtAl:2014:W14-33} using BERT embeddings ~\cite{devlin2018bert}.}
\label{fig:loss_histogram}
\end{figure}

We now proceed to state our main theorem, which gives
strong data selection guarantees under Assumption~\ref{ass:lip}.
\begin{thm}\label{thm:main}%[See formal statement in \Cref{thm:mainPrecise}]
  Let $\eps,z > 0$, $\Lambda \in \R^k$. Let $\calD$ be a dataset and $\ell$ a loss function that is $(z, \Lambda)$-well-behaved with respect to an embedding $E$ and a clustering $(C_1, ..., C_k)$ into $k$ clusters. 
  Then, there exists an algorithm that makes 
  $k$ queries to $\ell$ and outputs a 
  sample $S$ of size $O(\eps^{-2})$
  and a weight function $w$ such that
  \begin{align*}
      &|\sum_{e \in \calD} \ell(e) - \sum_{e \in S} w(e) \ell(e)| 
      \le 
  \eps\left(\sum_{e \in \calD} \ell(e) + 2 \cost^\Lambda_{\calC, z}(\calD)\right),
  \end{align*}
  with constant probability, where $\cost^\Lambda_{\calC, z}(\calD) = \sum_{i=1}^k \Lambda_i \cost_{1,z}(C_i)$, namely the $(k,z)$-clustering cost where cluster $i$ is weighted by $\Lambda_i$.
\end{thm}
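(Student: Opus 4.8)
The plan is to run importance (sensitivity) sampling against a \emph{proxy} loss that is read off from the cluster structure, so that $\ell$ itself only needs to be queried at the $k$ centers. Write $c_1,\dots,c_k$ for the centers of $C_1,\dots,C_k$, and for $e\in C_i$ define the proxy
\[
g(e) \;=\; \ell(c_i) + \Lambda_i\,\|e-c_i\|^z .
\]
Evaluating $g$ on all of $\calD$ costs exactly $k$ queries to $\ell$ (one per $\ell(c_i)$), since the remaining quantities $\Lambda_i$ and $\|e-c_i\|^z$ come from the embedding and from Assumption~\ref{ass:lip}. The first thing I would establish is that $g$ sandwiches $\ell$: the well-behaved assumption gives $\ell(c_i)-\Lambda_i\|e-c_i\|^z \le \ell(e)\le \ell(c_i)+\Lambda_i\|e-c_i\|^z$, hence $\ell(e)\le g(e)$ and $0\le g(e)-\ell(e)\le 2\Lambda_i\|e-c_i\|^z$.

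Next I would set up the estimator. Let $G=\sum_{e\in\calD} g(e)$, draw $m=O(\eps^{-2})$ points i.i.d.\ with probability $p_e=g(e)/G$, and weight each selected $e$ by $w(e)=G/(m\,g(e))$. Routine importance-sampling algebra makes $\hat L:=\sum_{e\in S}w(e)\ell(e)$ an unbiased estimator of $L:=\sum_{e\in\calD}\ell(e)$. Two consequences of the sandwich inequality then drive the analysis. First, using nonnegativity of the loss together with $\ell\le g$, each summand $\ell(X_j)/(m\,p_{X_j})=G\,\ell(X_j)/(m\,g(X_j))$ lies in $[0,G/m]$, so $\hat L$ is a sum of $m$ bounded independent terms. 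Second, summing the two-sided bound controls the normalization by exactly the target quantity:
\[
G = L + \sum_{e}\big(g(e)-\ell(e)\big) \;\le\; L + 2\sum_{i=1}^k \Lambda_i\,\cost_{1,z}(C_i) \;=\; L + 2\cost^\Lambda_{\calC,z}(\calD)=:T .
\]

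Finally I would apply a bounded-difference concentration inequality; Hoeffding suffices. With each term in $[0,G/m]$, Hoeffding yields $\Pr[\,|\hat L - L|\ge \eps T\,] \le 2\exp\!\big(-2\eps^2 T^2 m/G^2\big)\le 2\exp(-2\eps^2 m)$, where the last step uses $T\ge G$ from the display above. Choosing $m=O(\eps^{-2})$ large enough drives this below a small constant, which is precisely the claimed bound $|\sum_{\calD}\ell - \sum_S w\ell|\le \eps\big(L+2\cost^\Lambda_{\calC,z}(\calD)\big)$ with constant probability, achieved with an $O(\eps^{-2})$-size sample and $k$ queries.

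I expect the only real subtlety to be the balancing act in the middle step: the estimator is built from the proxy $g$, whose mass $G$ must be simultaneously large enough to keep every term bounded by $G/m$ (the inequality $\ell\le g$) and small enough ($G\le T$) that the relative error scales with the target $\eps T$. It is exactly the one-sided and two-sided halves of the sandwich inequality that reconcile these two demands, so getting that inequality and plugging $T\ge G$ into the exponent is the crux; the concentration step and the unbiasedness computation are otherwise mechanical. A minor hygiene point to dispatch is elements with $g(e)=0$, which by the sandwich also have $\ell(e)=0$ and can simply be removed from the sampling support.
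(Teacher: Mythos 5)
Your proposal is correct and follows essentially the same route as the paper: the proxy $g(e)=\ell(c_i)+\Lambda_i\|e-c_i\|^z$ is exactly the quantity $\hat\ell(e)+\Lambda_i v(e)$ used in Algorithm~\ref{alg:oneRound}, the sampling distribution and weights coincide, and the sandwich bound $\ell\le g\le \ell+2\Lambda_i\|e-c_i\|^z$ giving $G\le T$ is the paper's key step as well. The only (immaterial) difference is that you close with Hoeffding using the deterministic bound $[0,G/m]$ on each term, whereas the paper bounds the second moment and applies Bernstein; both yield the same $O(\eps^{-2})$ sample size.
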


%On the other hand, $\cost_{k, 2}$ is the $k$-means cost of $\calD$. 
To interpret this result better note that when the loss function is $(\lambda, z)$-Hölder continuous, the upper-bound becomes $\eps ( \sum_{e \in \calD} \ell(e) + 2 \lambda \cost_{k, 1}(\calD))$, where $\cost_{k, z}(\calD)$ is the $(k,z)$-clustering cost of the dataset (which is $k$-means when $z=2$, or$k$-median when $z=1$ and the loss is $\lambda$-Lipshitz). 
We believe that our condition is quite a weak requirement; furthermore, we 
show that such a condition is needed: {\em There exist worst-case loss 
functions that are not Hölder continuous and for which it is necessary to 
query the whole dataset in order to get the above theorem statement}
(see \cref{thm:nonAdaptive}). Thus, this answers the second question raised 
above.  

The assumption on the clustering can be read as follows: we expect the embedding to have a clustered structure that an algorithm (e.g., $k$-means++) can discover. {\em We expect elements in each cluster to be similar, and therefore that the loss function is smooth within each cluster} -- this is formalized by the Hölder continuity. Note that the clustering may not be optimal, we only need to be able to compute it efficiently.

\begin{comment}
Except for this assumption, the loss function and the structure of the problem can be quite general.
%: therefore, this answers question 4. 
We complement this with a similar theorem for linear regression. Combined it completes our answer for question 4. 
\end{comment}

The additive error in our results depends on the $(k, z)$-clustering cost, which is defined as $\cost_{k, z}(\calD) := \min_{|C| = k} \sum_{e \in \calD} \min_{c \in C} \|e-c\|^z$. 
Therefore, depending on the choice of $z$, our upper-bound can be made more, or less, robust to outliers: the smaller the $z$ the more resilient to outliers it becomes; for sufficiently large $z$ the objective becomes the $k$-center objective of \cite{SenerS18}, which would
translate here (with the assumption that the loss is $\lambda$-Lipschitz) into an upper bound $n \cdot \lambda \cdot \min_{|C| = k} \max_{e \in \calD} \min_{c \in C} \|e-c\|$. This addresses question 1 raised above.

We further demonstrate that the resulting sampling strategy outperforms classic data selection approaches, namely, training the model using the set $S$ obtained via \Cref{thm:main} gives a several percentages increase in  accuracy (more than 4\% for Fashion MNIST) than using other methods. Similarly, for linear regression, we show empirically that our sampling strategy is competitive and sometimes outperforms more sophisticated state-of-the-art methods, such as leverage score sampling, adding a fundamentally new sampling strategy to the growing body of work on active regression 
\cite{CP19,CD21,PPP21,MMW022,WY23}.

We defer a detailed survey of related work to~\Cref{appendix:further_related_work}

\section{Problem Formulation}
Given a dataset $\calD$  and a 
machine learning model, the high-level goal is to find a 
subset $S$ of $\calD$ such that training the model on $S$ yields
approximately the same model as training the model on $\calD$, while the time taken to compute $S$ and train the model on $S$ 
should be much smaller than the time taken to train the model 
on $\calD$.

We focus here on the general data selection problem, and dedicate \Cref{sec:regression} to the special case of linear regression.

\subsection{Our Model}
\label{sec:ourmodel}
We assume that we are given a dataset $\calD$ of size $n$, together with a loss function 
$\ell$ such that $\ell(e)$ is the loss of the model on instance $e$.
%gradient of the loss function
%of the model on instance $e$. \todo{DS: need to explain why we work with the gradient (c.f. remark below)}
%\todo{VCA: or simply the loss function? Or what makes the most sense here?} 
%We can make the high-level problem statement more formal as follows.
The goal is to sample $S \subseteq \calD$ of limited  size, 
and associate a weight function $w : S \mapsto \mathbb{R}_+$ such that
\[\Delta(S) := |\sum_{e \in \calD} \ell(e) - \sum_{e \in S} w(e) \ell(e)| \le \delta,\]
for the smallest possible $\delta$. Note that $\ell(e)$ can be queried 
simply by running the model on $e$ and computing the loss 
for $e$, which is expensive. This is why we want to compute $S$ without having to compute $\ell(e)$ for 
all $e \in \calD$.
% \todo{VCA: I think this is pretty generic formulation but I would love to hear more expert thoughts.}
%
We now provide a complete formulation of the problem. 
\begin{definition}[Data Selection, \cite{SenerS18}]\label{def:dataSelect}
The data selection problem is defined as follows:
\begin{itemize}
    \item 
  \textbf{Input:} A dataset $\calD$, an oracle access to a function $\ell : \calD \mapsto \mathbb{R}_+$, and a target size $s$.
  \item \textbf{Output:} A sample $S \subseteq \calD$ of size at most $s$
  together with a weight function $w:S \mapsto \mathbb{R}_+$ such that
  \begin{itemize}
      \item The number of queries to $\ell$ (i.e.: inferences) is at most $s$.
      \item $S$ minimizes
      \begin{equation}\label{eq:est}
          \Delta(S) := |\sum_{e \in \calD} \ell(e) - \sum_{e \in S} w(e) \ell(e)|.
        \end{equation}
  \end{itemize}
  \end{itemize}
\end{definition}
Note two differences from the the original definition of \cite{SenerS18}. 
(A) First, they use uniform weights, namely $\forall s \in S, w(s) = \frac{|\calD|}{|S|}$ (in which case, minimizing \Cref{eq:est} means that the average $\ell(e)$ in the sample should be close to the average $\ell(e)$ for the whole data). We slightly generalize the definition to allow for different sampling strategies, while keeping an unbiased estimator.

(B) Second,  \cite{SenerS18} consider the loss after re-training the model with $S$, namely $\left| \frac{1}{n} \sum_{e \in \calD} \ell(e, \calA(S)) - \sum_{e \in S}w(e) \ell(e, \calA(S))\right|$. In words, the loss of the model trained on $S$ is roughly the same evaluated on $S$ as on $\calD$. In order to bound this quantity, Sener and Savarese make {\em strong assumptions on the distribution of the dataset}, namely the labels are drawn randomly from a structured distribution, and it is further assumed that the training loss is $0$ on their sample.
Instead, we stay more general and focus on the loss in the current model. Our underlying assumption is that, if $S$ approximates the loss well, then it contains ``typical'' items of the dataset (with respect to the current model), and therefore training the model based on $S$ should be similar as  training it on $\calD$. 
This formulation allows us to show strong theoretical results for the data selection problem, {\em without any assumptions on $\calD$}. 
%This contrasts with \cite{SenerS18}, who needs to assume $\calD$ is drawn randomly from a constrained distribution.
Note that our objective is more challenging than the one from Sener and Savarese: The bound we prove on $\Delta(S)$ implies
the result of~\cite{SenerS18} (under their assumption about
the model loss).

% \cite{SenerS18} formalized the problem in this way as the term $\Delta(S)$ naturally appears as an upper-bound on the active learning loss, namely as the expected future loss of the model trained on the sample. More precisely, when the dataset consists of random samples from an unknown distribution $\rho$, their task is to minimize the expected future loss $\E_{x \sim \rho}[\ell(x, \calA(S))]$, where $\calA$ is a learning algorithm that outputs a set of parameters for the model, $S$ is the sample and $\ell(\cdot, \calA(S))$ is the loss function given those parameters. They note that this expectation can be upper-bounded with
% \begin{align}
% \notag
%     \E_{x \sim \rho}[\ell(x, \calA(S))] 
%     \leq \underbrace{\left|\E_{x \sim \rho}[\ell(x, \calA(S))] - \frac{1}{n} \sum_{e \in \calD} \ell(e, \calA(S))\right|}_{\text{Generalization error}}
%     + \underbrace{\left|\sum_{e \in S}w(e) \ell(e, \calA(S))\right|}_{\text{Training Error}}\\
%     \label{eq:futureLoss}
%     + \underbrace{\left| \frac{1}{n} \sum_{e \in \calD} \ell(e, \calA(S)) - \sum_{e \in S}w(e) \ell(e, \calA(S))\right|}_{\text{Coreset Error}}
% \end{align}
% Since Generalization and Training error are well understood, \cite{SenerS18} state that the critical part of active learning is to handle the Coreset error, which motivates \Cref{def:dataSelect}.

\subsection{Assumptions on $\ell$}
\paragraph{Limits to the general case}
The above formulation requires that the number of queries to $\ell$ is sublinear in $|\calD|$.
Unfortunately, as long as $s = o(|\calD|)$ it is impossible to bound $\Delta(S)$ without further assumptions on $\ell$ or $\calD$, which can be seen by the following worst-case instance: The adversary chooses
    uniformly at random (u.a.r.) an element $e^* \in \calD$ and defines $\ell(e^*) = 1$ and 
    $\ell(e) = 0$ for all $e \neq e^*$. Then computing with constant success probability a sample $S$ 
    of size $s = o(\calD)$ such that $\Delta(S) = o(\sum_{e \in \calD} \ell(e))$ 
    with $o(|\calD|)$ queries to $\ell$  is 
    impossible.
Of course, this worst-case instance is unrealistic and we can hope to better capture 
the structure of real-world dataset.
    
\paragraph{Assumption on the embeddings}
In practice we can assume that each element $e$ in $\calD$ 
could be associated with a vector $v_e$ in $\mathbb{R}^d$ for some 
$d$, possibly coming from a model that is ``well-behaved'' with 
respect to the loss function of the model.
More concretely, the embeddings of the data elements can either be obtained from a generic embedding of the input dataset $\calD$, 
e.g., the BERT or word2vec embeddings for words~\cite{devlin2018bert,mikolov2013efficient}, 
or an embedding obtained through the last layers of the
model being trained. The last assumption is particularly
realistic in the \emph{warm start} or \emph{fine-tuning} regime where the model has 
already be partially trained.

\paragraph{H\"older Continuity assumption} This is the assumption presented above as Assumption~\ref{ass:lip}.
Lipschitzness is a common assumption, and is theoretically grounded for some embeddings (see e.g. Lemma 1 in \cite{SenerS18} for CNN).
Our assumption relaxes Lipschitzness, and our theoretical finding are therefore more general.

In the following, to ease notation for each element $e \in \calD$, we will
also use $e$ to denote its embedding in $\R^d$.

The problem we consider throughout the rest of the paper is the \emph{Data Selection under well-behaved loss} problem, which is the problem of \Cref{def:dataSelect} when the loss function $\ell$ is well behaved (see formal definition in \Cref{ass:lip}).
 This definition can be extended to the active learning setting
where the objective is to iteratively choose a set of elements to
sample based on the model updates.
\begin{definition}[$r$-Adaptive Active learning under well-behaved norm]
The $r$-adaptive active learning problem under $(z, \lambda)$-Hölder Continuity is 
defined as follows:

\begin{itemize}
    \item 
      \textbf{Input:} A set of elements $\calD \subset \R^d$, an oracle access to a function
      $\ell : \calD \mapsto \mathbb{R}_+$ that is well-behaved, a target size $s$ and an adaptivity parameter $r$.
  \item \textbf{Adaptivity:} There are $r$ rounds. At round $i$, 
  the algorithm can query $\ell$ on a set $Q_i$ of size at most $s$. $Q_i$ can 
  only be defined based on the results of $\ell$ on $\cup_{j < i} Q_{j}$ and $\calD$.
  \item \textbf{Output:} For all $i \in [r]$, a sample $S_i \subseteq \calD$ of size at most $s$
  together with a weight function $w_i:S \mapsto \mathbb{R}_+$ such that
 % \begin{itemize}
     % \item For each $i$ the  number of queries to $\ell$ is at most $s$.
     % \item 
      $S_i$ minimizes \[\Delta(S) := |\sum_{e \in \calD} \ell(e) - \sum_{e \in S_i} w_i(e) \ell(e)|.\] 
  %\end{itemize}
  \end{itemize}
\end{definition}

\subsection{Clustering Preliminaries}
We defer a detailed description on clustering preliminaries to \Cref{appendix:clustering_preliminaries}. Most importantly,  the $(k, z)$-clustering cost of $C$ on $\calD $ is $\cost_z(\calD , C) := \sum_{x \in \calD } \min_{c \in C} \|x-c\|^z$
and $\cost_{k, z}(\calD ) := \min_{C \subset \R^d,~|C| \le k} \cost_z(\calD ,C)$. 
For $z=1$, this objective corresponds to $k$-median, while for $z=2$ it corresponds to $k$-means. We call a clustering $\calC$ any partition of $\calD$ into $k$ parts (called clusters) $C_1, ..., C_k$. Given a $\Lambda \in \R^k$, we define $\cost^\Lambda_{\calC, z}(\calD) = \sum_{i = 1}^k \Lambda_i \cost_{1,z} (C_i)$.

\section{Algorithmic Results}
We now study sampling procedures for the active learning problem, defined in the previous section.
Our goal is to build a sampling strategy such that $\sum_{s \in S} w(s) \ell(s)$ is an unbiased estimator of $\sum_{e \in \calD} \ell(e)$, and show that the estimator is tightly concentrated around its mean. We will first study the case where it is not possible to query the loss function at all, and show a lower bound on the error achievable. We then present some adaptive algorithms, which query  the loss function sparingly.

% $\var(X) := \sum_{x \in X} ||x-\mu(X)||^2$. Given a set of $k$ point $C \subset \R^d$,
% we denote the $k$-means cost of $C$ on $X$ as $\cost(X, C) := \sum_{x \in X} \min_{c \in C} ||x-c||^2$
% and $\cost(X) := \min_{C \subset \R^d,~|C| \le k} \cost(X,C)$. Note that $\var(X) = \cost(X)$ for $k=1$.
% Without loss of generality, we assume that the distance between any pair of points
% in $X$ is in $[2, \Delta]$ for some large value $\Delta$. 

\subsection{Algorithm and Lower Bound for the Non-Adaptive Case}
We first focus on the context where the algorithm cannot query function $\ell$ at all. In this case, if one only assumes the loss function to be Hölder continuous the error must scale linearly with both the size of the dataset and its diameter, and this can be achieved by a random sample of the data points, as we show in the following theorem. The proof is deferred to \Cref{appendix:proof_non_adaptive}

\begin{thm}\label{thm:nonAdaptive}
  Let $\eps,\lambda > 0$. 
  There is a constant $c$, a dataset $\calD$ and a loss function $\ell$ that is $(1, z)$-Hölder such that, when $S$ is a uniform sample of size $1/\eps^2$ with weight function $w(e) = n/s$, it holds with constant probability that
  \[\Delta(S)  = |\sum_{e \in \calD} \ell(e) - \sum_{e \in S} w(e) \ell(e)|\ge c \eps n \sup_{e \in \calD} \ell(e).\]
  
  Furthermore, this lower bound is tight: for all dataset $\calD$ and loss function $\ell$,
  a uniform sample $S$ of size $s = O(1/\eps^2)$ with weights $w(e) = n/s$ satisfies with constant probability
  $\Delta(S) \le \eps n \sup_{e \in \calD} \ell(e).$
 \end{thm}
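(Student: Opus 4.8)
The plan is to handle the two claims separately, both via a second-moment analysis of the unbiased estimator $\hat L := \sum_{e\in S} w(e)\ell(e) = \tfrac{n}{s}\sum_{e\in S}\ell(e)$. For the tightness (upper bound) claim, I would first note that since $S$ is a uniform sample and $w(e)=n/s$, the estimator is unbiased: $\E[\hat L] = \sum_{e\in\calD}\ell(e)$. Writing $M := \sup_{e\in\calD}\ell(e)$, the per-draw variance of $\ell$ is $\E[\ell^2]-(\E\ell)^2 \le \tfrac1n\sum_{e}\ell(e)^2 \le M^2$, so averaging over $s$ independent draws gives $\var(\hat L)\le n^2 M^2/s$. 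Chebyshev's inequality then yields $\Pr[\Delta(S)\ge \eps n M]\le \frac{n^2M^2/s}{\eps^2 n^2 M^2} = \frac{1}{s\eps^2}$, which is an arbitrarily small constant once $s = C/\eps^2$ with $C$ large enough. This establishes the second part of the theorem.

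For the lower bound I would exhibit a hard instance on which this same estimator fluctuates by $\Theta(\eps n M)$. Take $\calD$ to consist of $n/2$ points whose embedding is a fixed point $p_1$ and whose loss is $M$, together with $n/2$ points whose embedding is a fixed point $p_0\ne p_1$ and whose loss is $0$. By placing $p_0$ and $p_1$ at distance at least $(M/\lambda)^{1/z}$, the Hölder condition $|\ell(x)-\ell(y)|\le \lambda\|x-y\|^z$ holds trivially: points of equal loss share an embedding, while points of differing loss are far enough apart. Hence $\ell$ is admissible. The true total loss is $\tfrac{n}{2}M$, and in a uniform sample of size $s$ the number $X$ of sampled high-loss points is distributed as $\mathrm{Bin}(s,1/2)$ (sampling with replacement; the without-replacement case is analogous via the hypergeometric distribution). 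Thus $\hat L = \tfrac{nM}{s}X$ and $\Delta(S) = \tfrac{nM}{s}\bigl|X-\tfrac{s}{2}\bigr|$.

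It then remains to show that $|X - s/2|\ge c'\sqrt{s}$ with constant probability; combined with $s=1/\eps^2$ this gives $\Delta(S)\ge \tfrac{nM}{s}\,c'\sqrt{s} = c'\eps n M$, as required. This anti-concentration step is the main obstacle, because the standard tools (Chebyshev, Chernoff) only \emph{upper} bound the fluctuation, whereas here I need a matching \emph{lower} bound. I would obtain it by a Paley--Zygmund argument applied to the nonnegative variable $Y := (X-s/2)^2$: its mean is the binomial variance $\E[Y]=s/4$, while its second moment is governed by the binomial fourth central moment, $\E[Y^2]=\E[(X-s/2)^4]=O(s^2)$. Paley--Zygmund with $\theta=\tfrac12$ then gives $\Pr[Y\ge \tfrac12\E Y]\ge \tfrac14\,(\E Y)^2/\E[Y^2]=\Omega(1)$, i.e. $|X-s/2|\ge \sqrt{s/8}$ with constant probability. (A Berry--Esseen comparison with a Gaussian yields the same conclusion.) Putting the two halves together proves both the stated lower bound and its tightness.
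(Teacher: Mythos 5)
Your proposal is correct, and both halves reach the paper's conclusions by somewhat different tools. For the upper bound, the paper applies Bernstein's inequality to $X_i = w(S_i)\ell(S_i)$ (bounding $\E[X_i^2] \le n^2R^2/s^2$ and $|X_i|\le nR/s$), which yields failure probability $\exp(-\Omega(\eps^2 s))$; your Chebyshev argument is more elementary and suffices for the stated ``constant probability'' guarantee, though it only gives failure probability $1/(s\eps^2)$ rather than an exponential tail. For the lower bound, the paper takes $n/2$ copies of $-1$ and $n/2$ copies of $+1$ with $\ell$ the identity, so the estimator is a Rademacher sum, and invokes the Littlewood--Offord anti-concentration bound ($\Pr[\sum_{s\in S}\ell(s)=x]=O(1/\sqrt{|S|})$ for any fixed $x$) to conclude $|\sum_{s\in S}\ell(s)|\ge c\sqrt{|S|}$ with probability $1/2$. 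You instead use losses in $\{0,M\}$, reduce to a $\mathrm{Bin}(s,1/2)$ count, and prove anti-concentration by Paley--Zygmund on $(X-s/2)^2$ via the binomial fourth moment. Your route is self-contained (no citation to Littlewood--Offord needed) and has the minor advantage of keeping $\ell$ nonnegative, consistent with the paper's convention $\ell:\calD\to\R_+$; the paper's version is shorter given the cited anti-concentration result. Your explicit verification of the H\"older condition by separating the two embedding points by at least $(M/\lambda)^{1/z}$ is a detail the paper glosses over and is worth keeping.
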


\subsection{Adaptive Algorithms}
The lower bound on  $\Delta(S)$ in \Cref{thm:nonAdaptive} shows  is that one must sample more carefully if good guarantees are desired. As explained in the introduction, we present a sampling strategy that queries at most $O(k)$ many points, and reduce the additive error to $\eps \lambda \cost_k(\calD)$. This is always better than $\eps n \, \underset{e \in \calD}{\sup}\, |\ell(e)|$, and, in case the embedding of $\calD$ has a clustered structured, can be drastically smaller.

\subsubsection{1-Round Algorithm}
In this section
we state the $1$-round algorithm and show \Cref{thm:main} (with a full proof deferred to \Cref{appendix:proof_thm_mainPrecise}). We start by the pseudo-code of the algorithm.

\begin{algorithm}
\caption{Data-Selection($\calD, k, \eps, \Lambda, \calC$)}
\label{alg:oneRound}
\begin{algorithmic}[1]
\STATE \textbf{Input}: a dataset $\calD$ partitioned into clusters $\calC = (C_1, ..., C_k)$ with centers $c_1, ..., c_k$ and a $k$-tuple of parameters $\Lambda_1, ..., \Lambda_k$.
\STATE For $e \in C_i$, define %$\calA(e) = \argmin_{a \in \calA} \|e-a\|$ the element of $\calA$ 
   % that is the closest to $e$,  
   $\hat{\ell}(e) := \ell(c_i)$ and 
    $v(e) := \|e-c_i\|^z$.
\STATE Let $s:= \lceil\eps^{-2} (2+2\eps/3)\rceil$. For $e \in C_i$ define $p_e := \frac{\hat{\ell}(e) + \Lambda_i v(e)}{\sum_i \Lambda_j \cost(C_i, \{c_i\}) + \sum_{x \in \calD} \hat{\ell}(x)}$ and $w(e) = s^{-1} p_e^{-1}$.
\STATE Compute a sample $S$ of $s$ points, picked independently following the distribution $p_e$. 
\STATE \textbf{Output:} the set $S$ with weights $w$.
\end{algorithmic}
\end{algorithm}

The proof of \cref{thm:main} works as follows: we let $X_i$ be the random variable corresponding to the contribution of the $i$-th sample to the cost. $\sum X_i$ is therefore an unbiased estimator for $\sum_{e \in \calD} \ell(e)$: we show that each $X_i$ has a small variance, and apply Bernstein's inequality to conclude. We formalize this argument in \Cref{appendix:proof_thm_mainPrecise}.

\begin{rem}
    Instead of requiring that, in each cluster, the worst-case $\frac{|\ell(x)-\ell(c_i)|}{\|x-c_i\|^z}$ is bounded, we can allow for a few outliers and require only that $\frac{|\ell(x)-\ell(c_i)|}{\|x-c_i\|^z} \leq \Lambda_i$ for all $x$ but a $1/k$-fraction of the probability mass defined line 3 of \cref{alg:oneRound}. This allows to have some outliers, 
\end{rem}

\subsubsection{$r$-Round Algorithm}
We now turn to obtain better guarantees than the above bounds by
allowing for more rounds. Here, we assume that we are given a set of centers $c_1, c_2, ...$ such that for all $k$, the set $(c_1, ..., c_k)$ is a good solution to $(k,z)$-clustering. Note that this is precisely the guarantee of the $k$-means++ algorithm (and, more generally, $D^z$ sampling). We let $\calC_k$ be the set of clusters corresponding to the centers $c_1, ..., c_k$.

\begin{thm}\label{thm:rounds}
  Let $\eps > 0, \Lambda \in \R^k$, and integer $r > 0$. Let $\calD$ be a dataset with a set of centers $c_1, ..., c_{kr}$, and $\ell$ be be a loss function that is well-behaved with respect to $\Lambda$ and $\calC_i$, for all $i \in \{k, 2k,...,kr\}$.
  Then
  there exists an algorithm that  for each round $i \in [r]$, queries $k$ elements per round and outputs a sample $S_i$ of size at most $O(1/\eps^2)$
  and a weight function $w_i$ such that:
  \begin{align*}
      \Delta(S_i) &= |\sum_{e \in \calD} \ell(e) - \sum_{s \in S_i} w_i(s) \ell(s)| \\
      &\le \eps\left(\sum_{e \in \calD} \ell(e) + \cost^\Lambda_{\calC_{ik}, z}(\calD)\right)
  \end{align*}
\end{thm}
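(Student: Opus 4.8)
The plan is to obtain \Cref{thm:rounds} as an iterated application of the one-round guarantee of \Cref{thm:main}, exploiting the nested structure of the given centers. Concretely, at round $i$ I would run \Cref{alg:oneRound} on the clustering $\calC_{ik}$ induced by the first $ik$ centers $c_1, \dots, c_{ik}$, using the $ik$ cached loss values $\ell(c_1), \dots, \ell(c_{ik})$, and output the resulting weighted sample $S_i$. The payoff of this construction is that the additive term involves the $(ik,z)$-clustering cost rather than the $(k,z)$-cost: finer clusterings have smaller weighted cost, so $\cost^\Lambda_{\calC_{ik}, z}(\calD)$ shrinks as $i$ grows (in the Hölder special case $\Lambda_j \equiv \lambda$ it is exactly $\lambda\,\cost_{ik,z}(\calD)$, which is nonincreasing in $i$), and hence more rounds can only improve the error.

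The first step is the query accounting, which is what justifies the per-round budget of $k$ queries. By assumption each prefix $(c_1, \dots, c_{jk})$ is a good $(jk,z)$-clustering, so the centers are nested: passing from round $i-1$ to round $i$ introduces only the $k$ new centers $c_{(i-1)k+1}, \dots, c_{ik}$. Thus in round $i$ the algorithm queries $\ell$ on exactly these $k$ fresh centers, while the earlier values $\ell(c_1), \dots, \ell(c_{(i-1)k})$ are reused from previous rounds. After round $i$ all $ik$ values are available, which is precisely what \Cref{alg:oneRound} needs to form the sensitivity distribution $p_e$ over the $ik$ clusters. This respects the $r$-adaptive constraint (at most $k$ queries per round) while effectively sampling against an $ik$-clustering.

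The second step is to invoke \Cref{thm:main} with the clustering $\calC_{ik}$. Its hypothesis is met because $\ell$ is assumed well-behaved with respect to $\calC_{ik}$ for every $ik \in \{k, 2k, \dots, kr\}$ (the Hölder vector being indexed by the $ik$ clusters of $\calC_{ik}$), so the theorem yields, with constant probability,
\[
\Delta(S_i) \le \eps\Bigl(\sum_{e \in \calD} \ell(e) + 2\,\cost^\Lambda_{\calC_{ik}, z}(\calD)\Bigr).
\]
Rescaling $\eps$ by a constant factor (taking $\eps' = \eps/2$ in \Cref{thm:main}, which changes the sample size $O(\eps^{-2})$ only by a constant) absorbs the factor $2$ and delivers exactly the stated per-round bound. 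Because the nesting, the query bookkeeping, and the clustering cost are deterministic, the only randomness in round $i$ is that of the sampling inside \Cref{alg:oneRound}.

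Finally, the main obstacle is not a new concentration argument — the heavy lifting is inherited from \Cref{thm:main} — but two bookkeeping points. The first is confirming that the prefix-optimality of the centers (as supplied by $k$-means$++$ / $D^z$-sampling) genuinely lets round $i$ reuse all previously queried center losses, so that only $k$ fresh queries are charged per round. The second concerns the success probability: \Cref{thm:main} holds per round with constant probability, so if one insists that all $r$ bounds $\Delta(S_1), \dots, \Delta(S_r)$ hold simultaneously, a union bound over rounds is required, which means boosting each round's success probability — for instance by enlarging the per-round sample by an $O(\log r)$ factor or by taking a median over independent repetitions. Stating the guarantee per round, as the theorem does, sidesteps even this and makes the result a direct corollary of \Cref{thm:main}.
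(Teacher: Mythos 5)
Your proposal matches the paper's proof: Algorithm~\ref{alg:rRound} in Appendix~\ref{app:rounds} does exactly what you describe --- it uses a prefix-clustering ordering so that round $i$ queries only the $k$ fresh centers at positions $[(i-1)k+1, ik]$, then reruns the sensitivity-sampling and Bernstein argument of Theorem~\ref{thm:main} against the clustering $\calC_{ik}$. Your extra remarks (rescaling $\eps$ to absorb the factor $2$ from Theorem~\ref{thm:main}, and the per-round vs.\ simultaneous success probability) are sound and, if anything, slightly more careful than the paper's one-line ``follows directly'' justification.
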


Note that the above algorithm allows to trade-off the round complexity and 
sample size and reaches optimality in the limit: when $r \cdot k = |\calD|$, we 
obtain an exact algorithm. 
The algorithm is very similar to \Cref{alg:oneRound}: We defer the presentation to \Cref{app:rounds}.

%\begin{enumerate}
%% \item Compute the element $\hat{x}$ that is the closest to the mean $\mu(X)$ of $X$.
%% \item Query the oracle on $\hat{x}$ to obtain $\ell(\hat{x})$.
%\item Compute an $O(1)$-approximation to prefix $k$-means clustering; i.e.: an
%ordering of the points in $\calD$ such that any prefix of length $k_0$ is an
%$O(1)$-approximation to $k_0$-means on $\calD$.
%\item At round $i$, query $\ell$ for the points at position in $[(i-1)k + 1, ik]$
%in ordering.
%Define $\calA$ to be the set of elements at position at most $ik$ in the 
%ordering.
%For each element $x\in \calD$, let $\calA(x)$ be the element of $\calA$ 
%    that is the closest to $x$, let $\hat{\ell}(x) := \ell(\calA(x))$ and 
%    $v(x) := ||x-\calA(x)||^2$.
%    \item Pick a sample $S \subseteq X$ where each point $x \in X$ is sampled with probability $\min(1, p_x)$, where 
%    \[p_x := \eps^{-2} \frac{\hat{\ell}(x) + \lambda v(x)}{\sum_{x \in X} \hat{\ell}(x) + \lambda \cost(X)}\]
%    and define $w_i(x) = \max(1, p_x^{-1})$ for all $x$ sampled.
%\item Output $S_i, w_i$ for round $i$.
%\end{enumerate}

\subsubsection{Computing the clustering $\calC$ and the parameter $\Lambda$}

Our algorithms require the knowledge of a clustering $\calC$ and the vector of parameters $\Lambda$. We explain here how to compute those values.

\paragraph{Finding a Clustering}

Our theorems require that the loss is well-behaved w.r.t an estimate of $\Lambda$ and a clustering $(C_1, ..., C_k)$. 
To compute such a clustering, we can use any algorithm for $(k,z)$-clustering, e.g., $D^z$-sampling (the generalization of $k$-means++ \cite{ArthurV07}), or some faster algorithms, e.g. \cite{DBLP:conf/nips/Cohen-AddadLNSS20,DBLP:conf/nips/Cohen-AddadLNSS21}.

\paragraph{Estimating $\Lambda$.}
Once we are given a clustering, we can query the loss function in order to estimate $\Lambda$. Formally, we have the following:
\begin{lem}\label{lem:upperBoundLambda}
    Assume that there is a probability $p$ such that, for each cluster $C_i$, 
    \begin{align*}
        \Pr_{x \in C_i}\left[\frac{|\ell(x)-\ell(c_i)|}{\|x-c_i\|^z} \in [\Lambda_i/\log(n), \Lambda_i]\right] \geq p,
    \end{align*}
    where the probability is taken over an $x$ chosen uniformly at random from $C_i$. Then, one can compute an upper bound on each $\Lambda_i$  with probability $99/100$  by querying the loss of $\log(100 k) / \log(1-p)$ points per cluster.
\end{lem}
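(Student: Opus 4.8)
The plan is to estimate each $\Lambda_i$ by a rescaled maximum of observed ratios. Concretely, for a fixed cluster $C_i$ I would draw $m$ points $x_1,\dots,x_m$ independently and uniformly at random from $C_i$, query $\ell(x_j)$ for each (the center loss $\ell(c_i)$ is already available from the $k$ queries made by Algorithm~\ref{alg:oneRound}), and set
\[
\hat\Lambda_i := \log(n)\cdot \max_{1\le j\le m}\frac{|\ell(x_j)-\ell(c_i)|}{\|x_j-c_i\|^z}.
\]
The multiplicative factor $\log(n)$ is what turns an \emph{observed} ratio --- which can only ever be a lower estimate of the true constant, since sampling cannot inspect every point --- into a valid \emph{upper} bound, exploiting the hypothesis that a $p$-fraction of the mass already lies within a $\log(n)$ factor of $\Lambda_i$.

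The key one-cluster claim is that $\hat\Lambda_i\ge\Lambda_i$ can fail only when \emph{every} one of the $m$ samples misses the good interval. By hypothesis each $x_j$ satisfies $\frac{|\ell(x_j)-\ell(c_i)|}{\|x_j-c_i\|^z}\in[\Lambda_i/\log(n),\Lambda_i]$ --- and in particular is at least $\Lambda_i/\log(n)$ --- independently with probability at least $p$. Hence the probability that no sample reaches $\Lambda_i/\log(n)$ is at most $(1-p)^m$, and on the complementary event the maximum is at least $\Lambda_i/\log(n)$, so $\hat\Lambda_i=\log(n)\cdot\max_j(\cdots)\ge\Lambda_i$. (The upper endpoint of the interval is not needed for the upper-bound conclusion, but it additionally gives $\hat\Lambda_i\le\log(n)\,\Lambda_i$ whenever $\Lambda_i$ genuinely dominates all ratios in the cluster, so the estimate overshoots by at most a $\log n$ factor.)

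To finish I would take a union bound over the $k$ clusters: the overall failure probability is at most $k(1-p)^m$, and choosing $m=\lceil\log(100k)/\log\!\big(1/(1-p)\big)\rceil$ makes $(1-p)^m\le 1/(100k)$, so the total failure probability is at most $1/100$. This is exactly the stated budget of $\log(100k)/\log(1-p)$ queries per cluster, read with the sign convention $\log(1-p)<0$, i.e.\ $\log_{1/(1-p)}(100k)$, yielding overall success probability $99/100$ as claimed.

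The argument is essentially a coupon-collector-style union bound, so I do not anticipate a genuine technical obstacle; the only points demanding care are (i) ensuring the $m$ draws are independent (sampling with replacement, or observing that $|C_i|$ is large enough that sampling without replacement only helps the bound), and (ii) making explicit that the sole role of the factor $\log(n)$ is to convert the sampled lower estimate into a valid upper bound --- which is precisely why the assumption is phrased as the two-sided interval $[\Lambda_i/\log(n),\Lambda_i]$ rather than as a one-sided tail condition.
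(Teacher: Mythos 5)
Your proposal is correct and is essentially identical to the paper's proof: both sample points uniformly from each cluster, observe that with probability at least $1-(1-p)^t$ some sample's ratio lands in $[\Lambda_i/\log(n),\Lambda_i]$, multiply the observed maximum by $\log(n)$ to obtain an upper bound, and union bound over the $k$ clusters via the per-cluster failure probability $1/(100k)$. Your write-up is if anything slightly more explicit than the paper's about the union bound and the sign convention in $\log(1-p)$.
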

The proof is deferred to Appendix~\ref{app:lambda}.
We note that we could have made different assumptions in the previous lemma: the $\log(n)$ is picked somewhat arbitrarily, to fit with an exponentially decreasing tail on the distribution of the ratios $\frac{|\ell(x)-\ell(c_i)|}{\|x-c_i\|^z}$. We believe that this assumption is quite natural, and indeed our experiments confirm it across different applications 
(see \cref{table:lambda}).

\section{Data Selection for Regression}
\label{sec:regression}

In this section, we specialize our method, i.e., sampling according to $(k,z)$-clustering cost, to the setting
of linear regression. 
Ideally, given a matrix $A$, our goal is to compute a sketching and rescaling diagonal matrix $S$ with as few non-zero entries  as possible such that computing the optimal regression on $S$ is equivalent to computing it on $A$. For this, we are seeking a ``coreset guarantee'', namely we want  $\|SAx - b\| \approx \|Ax-b\|$, for all $x$. In the following $a_i$ denotes the $i$-th row of $A$.
Therefore, we define the data selection problem for regression as follows:

\begin{definition}
The data selection problem for linear regression is defined as follows:
\vspace{-1em}
\begin{itemize}
    \item \textbf{Input:} a $n \times d$ matrix $A$ and an $n$-dimensional vector $b$, and a target number of queries $k$. 
    \item \textbf{Output:} A sample $S \subseteq [n]$ of size at most $s$ together with a weight function $w : S {\rightarrow} \R_+$ such that 
    $\left|\sum_{s \in S} w(s) (\langle a_s, x\rangle  - b_s)^2 - \|Ax-b\|_2^2\right|$ is as small as possible, for all $x$.
\end{itemize}
\end{definition}

We show in \cref{sec:experimental_results_regression} that our method is faster and provides results as accurate of the state-of-the-art data selection mechanisms. We provide here some theoretical explanations for this success.

As is the case for the previous active-learning problem, we 
%cannot achieve such a general statement, and 
need to make several assumptions, both on $A,$ and $ b$ and to restrict the set of possible $x$. Our first set of assumptions, similar to \cref{ass:lip}, is the following: 

\begin{assumption}\label{ass:regression}
    For all $i, \|a_i\|_2 = O(1)$ and $b_i = O(1)$. 
    Given $\Lambda \in \R^k$ and a $k$-clustering $\calC = (C_1, ..., C_k)$ of the indices, we say that the input is well-behaved w.r.t $\Lambda$ and $\calC$ when, for every index $j$  in cluster $C_i$ $|b_i - b_j| \leq \Lambda_i \|a_i - a_j\|_2^z$ (where $(a_i, b_i)$ is the center of cluster $C_i$). 
\end{assumption}
The above assumption is similar to the Hölder-continuity  assumption we made for \cref{thm:main}: it formalizes that in the sample, the labels (i.e., $b_i$s) must be close to those of their centers when their embeddings (i.e., the $a_i$s) are close. 
As before, this is necessary to get any result querying a sublinear number of labels $b_i$. This assumption is also related to that of \cite{SenerS18} who assume the labels of the data are drawn randomly, following distributions that are Lipschitz. 

The basic idea of our algorithm is to interpret each row of $A$ as a point in $\mathbb{R}^d$ and cluster these points using $k$-median. Then we compute the optimal regression $x_0$ for the dataset consisting of all centers, each weighted by the size of its cluster, and use $x_0$ to define a probability distribution over all points. Sampling $s$ points according to this distribution gives a set $S$ together with a suitable weight function.

%We write $a_1, ..., a_n$ the lines of $A$, and abusively denote $A = \{a_1, ..., a_n\}$.

\begin{algorithm}
\caption{Data-Selection-Regression($A, k, \eps, \Lambda, \calC$)}
\label{alg:regression}
\begin{algorithmic}[1]
\STATE \textbf{Input:} a matrix $A$ representing the dataset, a clustering $\calC = (C_1,..., C_k)$ of the dataset, and a $k$-tuple of parameters $\Lambda_1,..., \Lambda_k$. 
% \STATE Compute a $O(1)$-approximation $\calA$ to $k$-median on $A$. 
\STATE For all $i \in [n]$, let $j$ be such that $a_j$ is the center of $a_i$'s cluster: define $\hat a_i = a_j$, $\hat{b}_i = b_j$ and  the function $v(a_i, x) = (\langle \hat a_i, x\rangle  - \hat b_i)^2$.
\STATE Compute the optimal regression $x_0$ for the dataset $\{\hat a_1, ..., \hat a_n\}$, i.e., the dataset where each center of $\calA$ is weighted by the size of its cluster.
\STATE For $i$ in clustering $C_j$, define $p_i := \frac{\Lambda_j \|a_i - \hat a_i\|  + v(a_i, x_0)}{\sum_{j' \in [k]} \Lambda_{j'} \cost (C_{j'}) + \sum_{i' \in [n]} + v(a_{i'}, x_0)}$,  and $w(i) = s^{-1} p_i^{-1}$.
\STATE Compute a sample $S$ of $s$ points, picked independently following the distribution $p$.
\STATE \textbf{Output:} the set $S$ with weights $w$.
\end{algorithmic}
\end{algorithm}

Our main theorem for regression is stated next.
%states that, under some assumption on the input data, our data-selection algorithm provides some coreset guarantee. 
The proof is deferred to \cref{app:regression}.

\begin{thm}\label{thm:regression}
    Let $\Lambda$, $A$ and $b$ respect \cref{ass:regression}  for a clustering $\calC$, with $\Lambda_i$ being constants, and let $\hat a_j$ and $x_0$ be as computed by \cref{alg:regression}. 
    
    Let $\calX$ be the set of vectors $x$ such that $\|x\|_2 = O(1)$ and $\forall j \in C_i, |\langle  \hat a_j, x-x_0\rangle | \leq \Lambda_i \|a_j - \hat a_i\|_2$.
    For $s= O(d/\eps^2 \log(1/\delta))$, it holds with probability $1-\delta$ that, for all $x \in \calX$,
    \vspace{-1em}
    \begin{align*}
    &\left|\sum_{s \in S} w(s) (\langle a_s, x\rangle  - b_s)^2 - \|Ax-b\|_2^2\right| \leq \\&\eps (\|Ax-b\|_2^2 + \cost^\Lambda_{\calC, 1}(\calD))
    \end{align*}
\end{thm}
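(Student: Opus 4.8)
The plan is to prove a coreset guarantee via sensitivity sampling. Fix the notation $f_i(x) := (\langle a_i, x\rangle - b_i)^2$ so that $\mu(x) := \|Ax-b\|_2^2 = \sum_i f_i(x)$, let $q_i := \Lambda_{c(i)}\|a_i-\hat a_i\| + v(a_i,x_0)$ be the unnormalized sampling weight of point $i$ (lying in cluster $c(i)$), let $Q := \sum_i q_i = P + V$ where $P := \cost^\Lambda_{\calC,1}(\calD)$ and $V := \sum_i v(a_i,x_0)$, and recall $p_i = q_i/Q$, $w(i) = 1/(s p_i)$. The estimator is $Z(x) := \sum_{s\in S} w(s) f_s(x)$, and unbiasedness is immediate: $\E[Z(x)] = s\sum_i p_i \frac{f_i(x)}{s p_i} = \mu(x)$. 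The work is all in the concentration, which I would establish first for a fixed $x$ and then upgrade to hold uniformly over $x \in \calX$.

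The crux is a per-point \emph{sensitivity bound}: I claim there is a constant $C$ (depending only on the $O(1)$ norm bounds and on $\max_j \Lambda_j$) with $f_i(x) \le C\,q_i$ for every $i$ and every $x\in\calX$. To see this I would decompose the residual as
\[
\langle a_i,x\rangle - b_i = (\langle \hat a_i, x_0\rangle - \hat b_i) + \langle a_i-\hat a_i, x\rangle + \langle \hat a_i, x-x_0\rangle + (\hat b_i - b_i),
\]
i.e.\ proxy residual plus embedding-drift plus parameter-drift plus label-drift. For $x\in\calX$ the proxy residual squared is exactly $v(a_i,x_0)$; the embedding drift satisfies $|\langle a_i-\hat a_i, x\rangle| \le \|a_i-\hat a_i\|\,\|x\| = O(\|a_i-\hat a_i\|)$ since $\|x\|=O(1)$; the parameter drift is at most $\Lambda_{c(i)}\|a_i-\hat a_i\|$ by the very definition of $\calX$; and the label drift is at most $\Lambda_{c(i)}\|a_i-\hat a_i\|$ by \cref{ass:regression} (instantiated with $z=1$). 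Squaring via $(t_1+t_2+t_3+t_4)^2 \le 4\sum_j t_j^2$, and then using the bounded-norm assumption to absorb squared distances into first powers ($\|a_i-\hat a_i\|^2 \le O(1)\,\|a_i-\hat a_i\|$) and $\Lambda_{c(i)}^2 = O(\Lambda_{c(i)})$, yields $f_i(x) \le C(v(a_i,x_0) + \Lambda_{c(i)}\|a_i-\hat a_i\|) = C q_i$. Applying the analogous decomposition to $\sum_i v(a_i,x)$ and using $V = \min_{x'}\sum_i v(a_i,x') \le \sum_i v(a_i,x)$ gives the companion bound $V \le O(\mu(x)+P)$, hence $Q = O(\mu(x)+P)$.

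With the sensitivity bound I would apply Bernstein's inequality to $Z(x) = \sum_{t=1}^s Y_t$, $Y_t = f_{i_t}(x)/(s p_{i_t})$. The bound $f_i(x)\le C q_i$ gives the uniform range $Y_t \le CQ/s$ and the second-moment bound $\E[Y_t^2] \le \frac{CQ}{s^2}\sum_i f_i(x) = \frac{CQ\,\mu(x)}{s^2}$, so $\sum_t \var(Y_t) \le CQ\mu(x)/s$. Targeting deviation $t = \eps(\mu(x)+P)$ and substituting $Q = O(\mu(x)+P)$, the variance term gives $t^2/\sum_t\var(Y_t) = \Omega\!\big(s\eps^2(\mu(x)+P)/\mu(x)\big) = \Omega(s\eps^2)$ and the range term gives $t/(CQ/s) = \Omega(s\eps)$; hence $\Pr[\,|Z(x)-\mu(x)| > \eps(\mu(x)+P)\,] \le 2\exp(-\Omega(s\eps^2))$ for a single fixed $x$.

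Finally, to make the statement hold for \emph{all} $x\in\calX$ simultaneously, I would place a $\gamma$-net over $\calX$: since $\calX$ lies in an $O(1)$-radius ball of $\R^d$ it admits a net of size $\exp(O(d\log(1/\gamma)))$, and a union bound over the net needs $s = \Omega(\eps^{-2}(d\log(1/\gamma) + \log(1/\delta)))$, which matches the claimed $s = O(\eps^{-2} d\log(1/\delta))$ up to the choice of $\gamma$ and logarithmic factors. I expect the main obstacle to be the \emph{off-net} step, namely bounding $|Z(x)-Z(x')|$ and $|\mu(x)-\mu(x')|$ for $x$ near its net point $x'$. The target $\mu$ is $O(1)$-Lipschitz on the bounded domain, but the weighted estimator $Z$ can have a large pointwise Lipschitz constant because the weights $1/(s p_i)$ blow up when some $q_i$ is tiny. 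Controlling this requires either taking $\gamma$ polynomially small (so the discretization error is dominated by $\eps(\mu(x)+P)$, using the uniform bound $Z(x)\le CQ$ that follows from the sensitivity bound) or a more careful chaining argument. This discretization step, not the fixed-$x$ Bernstein bound, is where the real care is needed.
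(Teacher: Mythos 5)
Your proposal is correct and follows essentially the same route as the paper's proof (via \cref{lem:regression}): the identical per-point sensitivity bound $(\langle a_i,x\rangle-b_i)^2 \le O(\Lambda_{c(i)}\|a_i-\hat a_i\| + v(a_i,x_0))$ — which the paper derives through two difference-of-squares factorizations rather than your additive four-term decomposition, but from the same ingredients ($\|x\|_2=O(1)$, the label-H\"older assumption, the defining property of $\calX$, and the optimality of $x_0$ to control $\sum_i v(a_i,x_0)$) — followed by the same Bernstein argument for fixed $x$ and a net over $\calX$. If anything, you are more explicit than the paper about the delicate off-net step, which the paper dispatches as ``standard discretization.''
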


\section{Experiments}
\label{sec:experiments}

We first present results on
neural networks: for a LLM translation task, and then for image classification. 
Finally, we present in Section~\ref{sec:experimental_results_regression}
experiments on a linear regression task.%, and in
% Section~\ref{sec:experimental_results_nn} we present results on
% neural networks: first for a LLM translation task, and then for image classification.

\subsection{Experimental Setup}

For the clustering required in Algorithm~\ref{alg:oneRound},
we run $k''$-means clustering using 
python's sklearn implementation, for some $k'' < k$ on the model's
last layer embeddings. Note that $k$ is the total number of data points to be selected,
while $k''$ is the number of cluster centers sampled (which are a subset of the points 
being selected).
For our experiments, we chose $k''=0.2 k$. Since the
$k''$ cluster centers might not be actual data points from the
dataset, we replace each center with the closest data point 
from the dataset in $\ell_2$ norm (note that by triangle inequalities, this loses only a factor $4$ in the $k$-means cost). After computing $\ell(e)$ for each 
center and extrapolating to the whole dataset using the 
approximation
$\widetilde{\ell}(e) := \ell(c_e) + \lambda \left\|e-c_e\right\|_2^2$ (where $c_e$ is the closest center to $e$),
we sample the remaining $k-k'-k''$ data points proportional to 
$\widetilde{\ell}$.

%While the second goal is a common goal in the data selection 
%literature, many methods violate the first goal, since many of
%them require a \emph{linear} number of model inferences 
%(one per data point). A notable exception is the coreset 
%algorithm of~\cite{SenerS18}, even though it still requires
%significant time to maintain a distance oracle.

\subsection{Experiments on Neural Networks}
\label{sec:experimental_results_nn}

\subsubsection{Fine-Tuning Large Language Models}
We use Algorithm~\ref{alg:oneRound} to select a sample on which to fine-tune an LLM for a translation task. We use the WMT T2T EnDe translation task dataset~\cite{bojar-EtAl:2014:W14-33} which consists of $4,592,289$ training examples, a test set of 
size $3003$ and a validation set of size $3000$. We deduplicated any repeated examples to clean the dataset. We fine-tune a T5-Small model~\cite{t5paper} with 77M parameters (details in \ref{apndx:llm-model-details}). To quantify the effect of the quality of the embedding used, 
we experiment with two different embeddings for the input, 
(1) BERT~\cite{devlin2018bert} and (2) Sentence-T5~\cite{ni2021sentence}. We run three different methods to subsample approximately $1\%$ of the data ($45000$ training examples).
We ran sensitivity sampling
with $k=4500$ and subsample $45000$ elements of the data and a Hölder constant
of 0.1 \footnote{We also experimented with Hölder constants up to 500. See \ref{apndx:llm-holders} for more details.}. 
The diversity sampling methods resembles the one of~\cite{SenerS18}: it consists of running $k$-means (instead of $k$-center), with $k=45000$ and using the elements closest to centers for training. Random is a uniform sample of the dataset of size $45000$. Random-deduped is a uniform sample as well, ensuring no duplicates. We show that our methods drastically improve over uniform or diversity and observe that the results are consistent across the two types of embeddings used (see Figure~\ref{fig:LLMresults}). 

In particular, we note: fine-tuning on the full dataset for $100,000$ steps yields an evaluation accuracy of $0.7$; as such, we improve from a $0.59$ random baseline to $0.6$; covering $9$\% of the headroom.

\begin{figure*}[tb]
\centering
\includegraphics[width=0.45\linewidth]{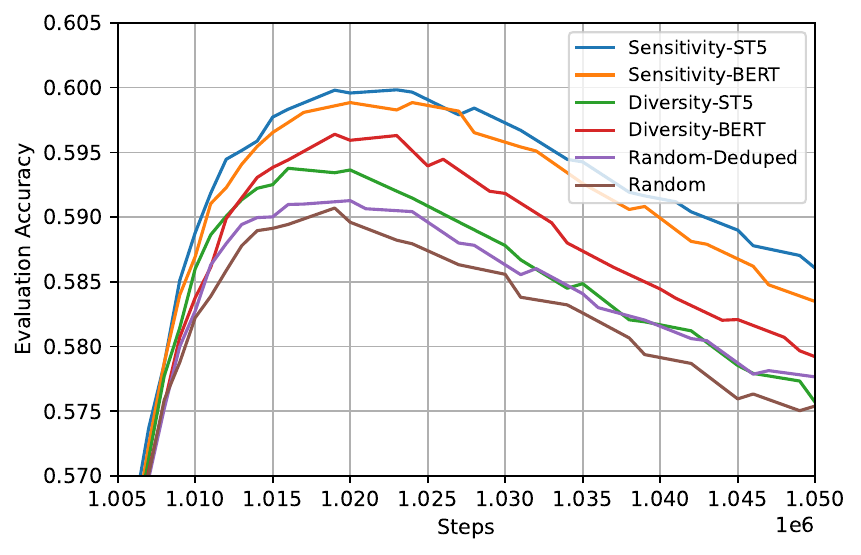}~~
\includegraphics[width=0.45\linewidth]{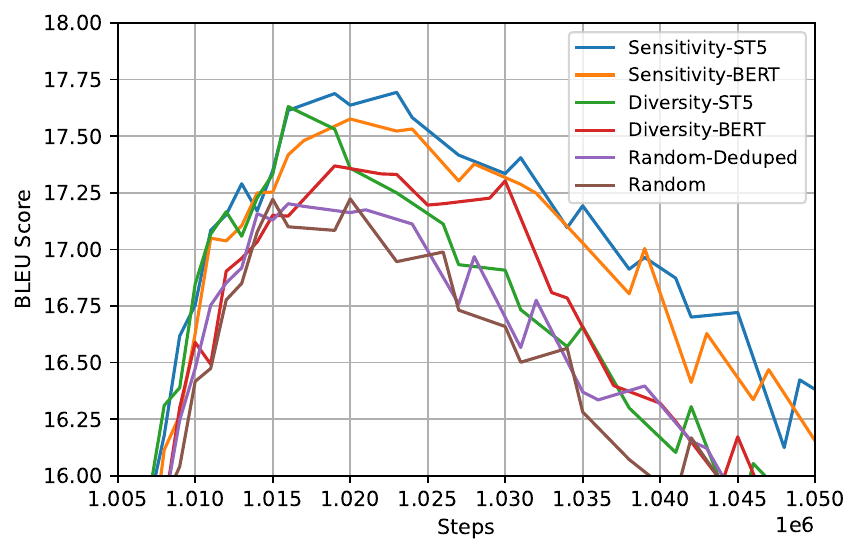}
\caption{Experimental results on the WMT T2T EnDe translation task dataset. We 
report the accuracy (left) and BLEU score (right) of the different methods used: Our method (Sensitivity) compared to Diversity (similar to~\cite{SenerS18}), Uniform cleaned (Random-Deduped), and Uniform (Random). 
Each method is required to produce a sample of roughly $1\%$ of the whole
dataset.}
\label{fig:LLMresults}
\end{figure*}

\subsubsection{Data Selection for Image Classification}
For our classification experiments, our setting
is as follows: Given a target number of data 
points $k$ that we need to sample, we first train an initial model $\cal M$
using a uniformly random subset of $k'<k$ data points, and then
sample the remaining $k-k'$ data points using Algorithm~\ref{alg:oneRound} with the
loss defined by $\cal M$. Finally, we train a model 
on all the $k$
data points and evaluate it on a validation set. For our experiments, we chose $k'=0.2k$. We use three classic datasets from UCI, MNIST~\cite{lecun1998gradient}, 
FMNIST~\cite{xiao2017fashion}, and CIFAR-10~\cite{krizhevsky2009learning}. We give more details, and a comparison with other algorithm (including Margin and Entropy sampling) in \Cref{app:classification}.

\paragraph{Our algorithms.} We consider two instantiations of 
the sensitivity sampling algorithm presented in 
Algorithm~\ref{alg:oneRound}: \emph{loss-based sampling} and
\emph{gradient-based sampling}.
For loss-based sampling, we set 
            $\ell(e) := L(y, \text{model}_{\theta}(e))$ to be the
            \emph{loss} of the model on example $e$ with respect to the true label $y$, where $\theta$ are the model parameters. 
For gradient-based sampling, we set 
            $\ell(e) := \left\|\nabla_{\theta} L(y, \text{model}_{\theta}(e))\right\|_2^2$ to be equal to the squared $\ell_2$ norm of the gradient update. 
          
We present our results in Table~\ref{tab:lg}, Figure~\ref{fig:images} and Table~\ref{table:images}.
We notice that the loss- and gradient-based sampling
versions of \cref{alg:oneRound} perform best when 
the number of samples is relatively small. 
In addition, based on
the runtime comparison in Figure \ref{fig:runtimes}, the loss-based algorithm performs best in terms 
of runtime.

\begin{figure}[H]
\centering
      \begin{tabular}{lccc}
        \toprule
        %\multicolumn{2}{c}{Part}                   \\
        \cmidrule(r){1-2}
        Algorithm     & MNIST & Fashion MNIST & CIFAR10 \\
        \midrule
        uniform& $0.9130$ & $0.8091$ & $0.4587$ \\
        coreset [SS18] & $0.9134$ & $0.7692$ & $0.4491$\\
        \midrule
        Loss Alg~\ref{alg:oneRound} & $0.9203$ & ${\bf 0.8140}$ & $0.4590$ \\
        Grad Alg~\ref{alg:oneRound} & ${\bf 0.9207}$ & $0.8107$ & ${\bf 0.4598}$ \\
        \bottomrule
      \end{tabular}
    \caption{Experimental results for selecting $k=2000$ data points and different datasets. For each algorithm, we show the accuracy on the validation dataset.}\label{tab:lg}
\end{figure}

% \begin{figure*}[h]
% \begin{subfigure}{}%215pt}
\begin{figure}[H]
\centering
%\framebox[4.0in]{$\;$}
%\fbox{\rule[-.5cm]{0cm}{4cm} \rule[-.5cm]{4cm}{0cm}}
\includegraphics[width=0.49\linewidth]{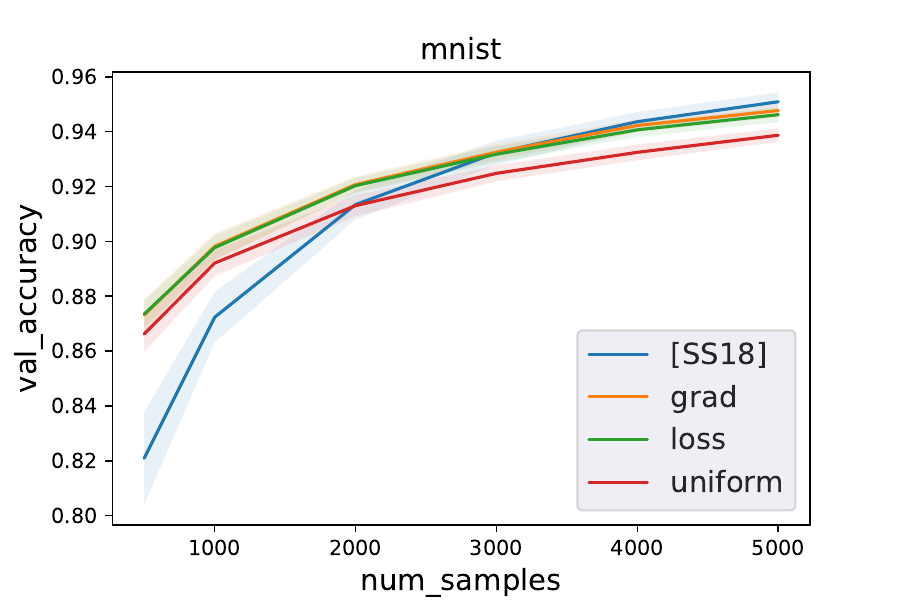}
\includegraphics[width=0.49\linewidth]{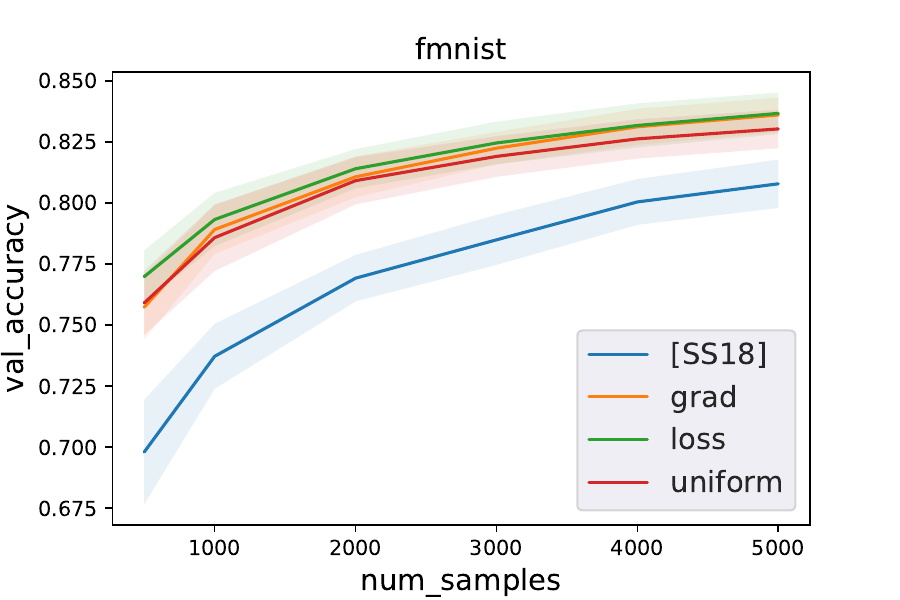}
\includegraphics[width=0.49\linewidth]{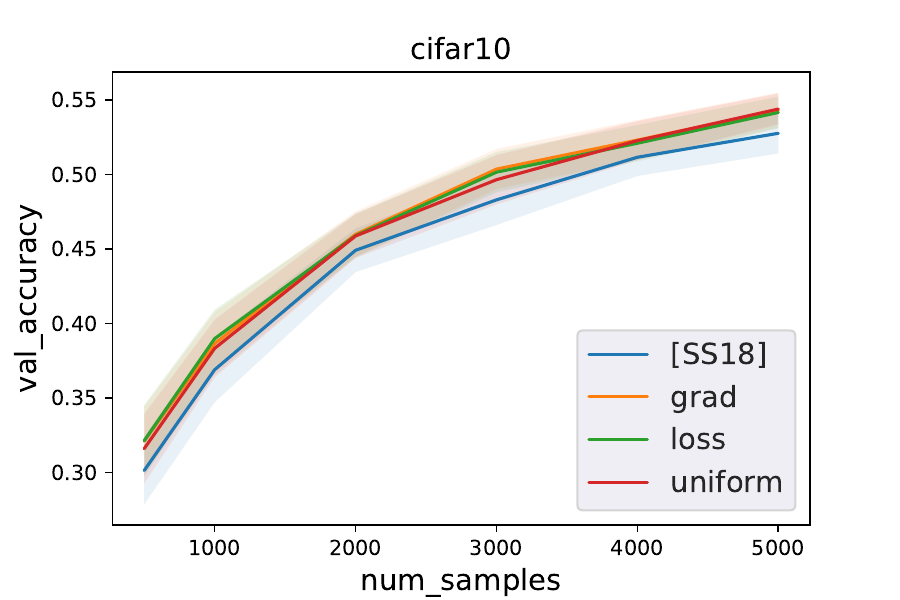}
    \caption{Plots of experimental results for different datasets. For each algorithm, we plot the accuracy on the validation dataset for different values of $k$ (number of
    samples). We also provide a runtime comparison on CIFAR10. We independently run each data point $100$ times, and present the mean with bands of one standard deviation.}
    \label{fig:images}
\end{figure}
% \end{subfigure}

% \begin{subfigure}{215pt}
\begin{figure}[H]
\centering
      \begin{tabular}{lccc}
        \toprule
        %\multicolumn{2}{c}{Part}                   \\
        \cmidrule(r){1-2}
        Algorithm     & MNIST & Fashion MNIST & CIFAR10 \\
        \midrule
        uniform coreset& $0.9130$ & $0.8091$ & $0.4587$ \\
        ~\cite{SenerS18} & $0.9134$ & $0.7692$ & $0.4491$\\
        \midrule
        Loss-based Algorithm~\ref{alg:oneRound} & $0.9203$ & ${\bf 0.8140}$ & $0.4590$ \\
        Gradient-based Algorithm~\ref{alg:oneRound} & ${\bf 0.9207}$ & $0.8107$ & ${\bf 0.4598}$ \\
        \bottomrule
      \end{tabular}
    \caption{Experimental results for selecting $k=2000$ data points and different datasets. For each algorithm, we show the accuracy on the validation dataset, averaged over 100 runs.}
    \label{table:images}
\end{figure}

% \end{subfigure}
% \caption{The experimental comparison. To minimize variation, we independently run each data point $100$ times, and present the mean with bands of one standard deviation.}
% \label{fig:results}
% \end{figure*}

\subsection{Experiments on Linear Regression}
\label{sec:experimental_results_regression}

Following our theoretical analysis in Section~\ref{sec:regression},
we validate our coreset sampling algorithm on a linear regression 
task. We present our results on the UCI gas sensor dataset from the University of California, Irvine repository~\cite{misc_gas_sensor_array_drift_dataset_224,vergara2012chemical,rodriguez2014calibration}
in Figure~\ref{fig:regression_results}. The dataset consists of 13910 input points
in 16 dimensions. 
We report the $R^2$ score, $R^2 := 1 - \frac{\sum_{i=1}^n (b_i - x_i)^2}{\sum_{i=1}^n (b_i - \overline{y})^2}$, where $\overline{y} := \frac{1}{n} \sum_{i=1}^n b_i$. We run \cref{alg:regression} with
some implementation details that are deferred to
Appendix~\ref{app:regression_algo_details}

We compare with uniform sampling and leverage score sampling.
Leverage score sampling is the standard sampling algorithm for linear
regression, and is known to have extremely good performance but
high runtime cost, since it requires solving a full-dimensional
linear system per sample data point. Surprisingly, we find that our
clustering-based algorithm performs almost equally well as leverage 
score sampling, while being drastically faster -- the $k$-medoids solution can be computed in linear time. 
We report the observed value for $\Lambda$ across the whole dataset in Table~\ref{table:lambda}.

\begin{figure*}[tb]
\centering
\includegraphics[width=0.4\linewidth]{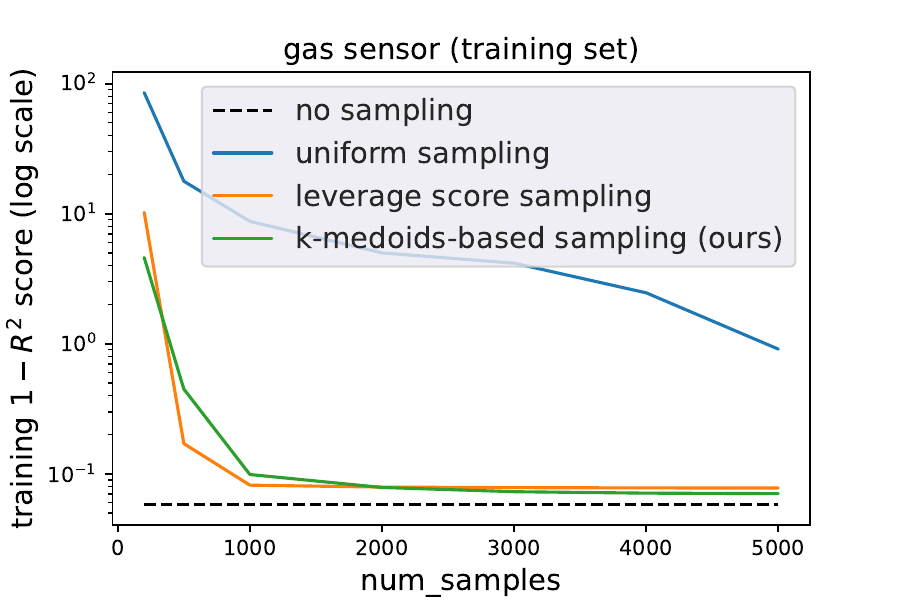}
\includegraphics[width=0.4\linewidth]{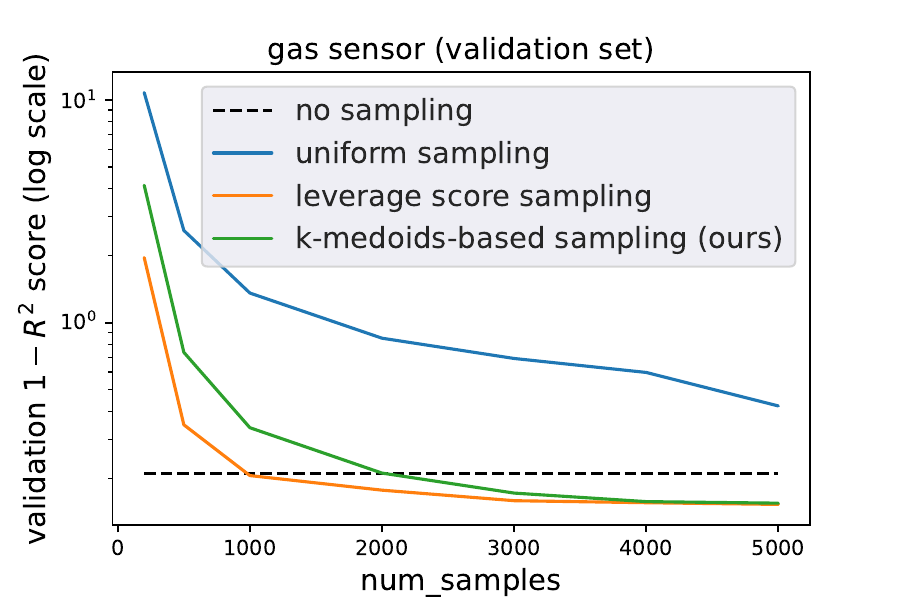}
\caption{Experimental results on the gas sensor regression dataset.
Each data point is the average of $100$ runs.}
\label{fig:regression_results}
\end{figure*}
\begin{comment}
We consider the problem of data selection, in which the goal is to train a model
on a new dataset in a runtime-efficient and data-efficient way. Runtime efficiency means that we prefer an algorithm with low
total runtime and a
\emph{sublinear} number of inference queries to the model.
Data efficiency means that there is a bound on the number of data 
points that can be used for training because labels are 
costly or hard to obtain, or the model is costly to train on a large dataset. 
\end{comment}

\newpage

\bibliography{sample}
\bibliographystyle{icml2024}

\newpage

\onecolumn
\appendix
\section{Preliminaries}

\subsection{Further Related Work}
\label{appendix:further_related_work}

% \todo{All those references are somewhat old, there's nothing about papers who cite/improve Sener and Savarese}

We present related work in data selection and active learning. For extensive and recent survey, we refer to \cite{ren2021survey}, and detail here some of the most relevant point for comparison.

Our work departs from previous work in the following ways: We work
in the regime where we have a budget on both the number of elements
selected (i.e.: labeled) and the number of inferences of the model.
Moreover, we present a rigorous analysis of our sampling mechanism
that works for a variety of machine learning models, as long as 
we are provided with embeddings that are Lipshitz with respect to 
the loss of the model.

Given an unlabeled set of points, the active learning problem
asks to identify the most relevant points to label~(\cite{settles2009active,cohn1996active}). In the era of big
data, labeling a big dataset is often too expensive. We are thus
given a budget of, say, $k$ elements that we can label. The 
question becomes how to pick these $k$ elements so as to maximize
the performance of the final model (that will be trained on these
elements). 

From a theoretical standpoint, \cite{dasgupta2004analysis} demonstrated that greedy active learning heuristics perform
poorly if agnostic to both data and learning algorithm. 
To circumvent these negative results, 
other works have made assumptions on the data-dependent realizability of the hypothesis space like
(\cite{pmlr-v28-gonen13}) or on a data dependent measure of the concept space called disagreement coefficient
(\cite{hanneke2007bound}).

Related to ours, several successful works have brought together unsupervised techniques 
such as clustering and information from the model (such as margin
scores), see e.g.:~\cite{citovsky2021batch}. The work of~\cite{SenerS18} brings together clustering, and sketching
techniques (coresets in this case).

Another line of works consists of bayesian active learning methods which use a non-parametric model, like a Gaussian process, to
obtain an estimate of the expected improvement on the model 
after each query (\cite{kapoor2007active}), or alternatively the 
expected error after a set of queries (\cite{roy2001toward}).
It seems that an important drawback is that these approaches
do not scale to large models (see the discussion in~\cite{SenerS18}).

Uncertainty based methods form another important family of active learning algorithm. They aim at finding relevant examples using
heuristics like highest entropy (\cite{joshi2009multi}), or geometric distance to decision boundaries
(\cite{tong2001support,brinker2003incorporating}).

Batch-active learning based on uncertainty may lead to a useless batch, where all queries are very similar (when the highest uncertainty is concentrated in a small region). To cope with this,  several methods that aim at trading-off diversity 
and uncertainty to select the points. The way the elements are iteratively selected can vary depending 
on the application from mini-batches to one-shot (e.g.:~\cite{hoi2006batch,guo2007discriminative,chakraborty2014adaptive,citovsky2021batch,amin2020understanding})
Specifically in the context of mini-batch active learning, a 
common approach is to use unsupervised machine learning techniques
to extract information from the data. Such methods include $k$-
Medoid~(\cite{schubert2019faster}), or 
MaxCover~(\cite{hochbaum1998analysis}) to select a set of data 
points that maximally cover the dataset with respect to some 
objective. 
\cite{elhamifar2013convex} and \cite{yang2015multi} design a discrete optimization problem for this purpose, that they solve using convex optimization
methods. Unfortunately, the running time of these methods
is quadratic in the input data size and so highly impractical for large data.

Covering or clustering approaches have also been tried in the past~\cite{joshi2010multi,wang2015querying}. The former does not
provide any theoretical guarantee associated to its approach.
 The latter uses empirical risk minimization  to minimize the difference between the maximum mean
discrepancy between iid. samples from the dataset and the actively selected samples (instead of the loss we work with).

Active learning has also been studied when tailored to some specific machine learning
problems such as nearest neighbors, logistic regression or 
linear regression with Gaussian noise  (\cite{wei2015submodularity,hoi2006batch,guo2007discriminative,yu2006active}).

Recently, active learning was also extended to $\ell_p$-regression for all $p \geq 1$ without any assumptions on the data, resulting in a number of optimal bounds
\cite{CP19,CD21,PPP21,MMW022,WY23}. These works are based on using sampling probabilities defined from the design matrix (agnostic to the label vector), and range from leverage scores $(p = 2)$ to $\ell_p$-sensitivities to $\ell_p$-Lewis weights, the latter achieving optimal bounds for all $p \geq 1$. Our work adds a new set of scores to this growing literature for regression, namely, scores that are proportional to the cost of clustering individual points. We note that in practice, computing an approximate $k$-median solution may be much faster than approximating the Lewis weights or leverage scores of a matrix since it does not involve computing the inverse of any matrices.  

If the model can be run on all input data, and so the confidence
of the model is known for all the input data elements, then
several set-cover based methods that aims at best covering the 
hypothesis space have been designed (\cite{guillory2010interactive,golovin2011adaptive,EsfandiariKM21}. 
The key distinguishing factor of our approach compared to these
works is that we do not require the model to be run on all the 
input data. Furthermore, as we demonstrate, our sampling technique applies more generally to problems other than regression.

\paragraph{Coresets}
Our ideas are inspired from the coreset literature. Coreset were introduced initially for $k$-median and $k$-means clustering: the goal is to compute a (weighted) set $S$ of points such that, for any set of $k$ centers, evaluating its cost on $S$ is almost the same as evaluating it on the full dataset \cite{HaM04}. Coresets with optimal size (which is $O(k\eps^{-2} \min(\eps^{-2}, \sqrt{k})$) exist \cite{CSS21, Cohen-AddadLSSS22, HLW23}, and one of the most standard tool to build a coreset is sensitivity sampling, namely sampling according to the cost in a constant-factor $(k, z)$-clustering solution. 

Ideas from the  literature on coreset for clustering have already spread to other domains: \cite{icml/TukanZMRBF23} presents coresets for Radial basis function neural networks of small sizes, and \cite{NEURIPS2020_0afe095e} for near-convex functions.
\cite{Mussay2020Data-Independent,DBLP:conf/nips/TukanMM22}  showed how to use coreset for prunning and compressiong neural networks, and \cite{MaaloufJF22} used coreset for fast least-square linear regression. For machine-learning tasks, 
\cite{DBLP:conf/tamc/TukanBFR20} present coresets for Support Vector Machines.

% Importance sampling for active learning (\cite{ganti2012upal}) 
% but not scalable.

\subsection{Clustering Preliminaries}
\label{appendix:clustering_preliminaries}

In the following, we are given a set of points
$\calD$ in the Euclidean Space $\R^d$ with $\ell_2$ norm. We let $\mu_z(\calD )$ be the power mean of $X$, namely the point $p$ that minimizes $\sum_{x \in \calD } \|x-p\|^z$. We let $\disp_z(\calD ) := \sum_{x\in X} \|x-\mu(\calD )\|^z$.

Given a set of $k$ points $C \in (\R^d)^k$, we denote the $(k, z)$-clustering cost of $C$ on $\calD $ as $\cost_z(\calD , C) := \sum_{x \in \calD } \min_{c \in C} \|x-c\|^z$
and $\cost_{k, z}(\calD ) := \min_{C \subset \R^d,~|C| \le k} \cost_z(\calD ,C)$. 
For $z=1$, this objective corresponds to $k$-median, while $k$-means is for $z=2$. Note that $\disp_z(\calD ) = \cost_{1, z}(\calD)$.

We say that a set $C$ of $k$ points is an $\alpha$-approximation to $(k, z)$-clustering on $\calD$ when $\cost_z(C, \calD) \leq \alpha \cost_{k, z}(\calD)$. 
An ordered list of centers $c_1, ..., c_n$ is an $\alpha$-approximation to Prefix-$z$-clustering when, for all $1 \le k \le n$, $(c_1, ..., c_k)$ is an $\alpha$-approximation to $(k,z)$-clustering on $\calD$.
An $O(1)$-approximation to prefix $(k,z)$-clustering can be computed using the algorithm of \cite{MettuP03}. $D^z$-sampling (which is $k$-means++ for $z=2$) gives an $O(\log k)$-approximation, which is fast and performs extremely well in practice.

\section{Deferred Proofs}
 In this section, we use Bernstein's concentration inequality: 
\begin{thm}[Bernstein's inequality]
\label{thm:bernstein}
    Let $X_1, \ldots, X_n$ be independent random variables and let $M > 0$ be 
    such that, for all $i$, $| X_i|\le M$. Then, for all $t > 0$,
    \[Pr[|\sum_{i} X_i - E[\sum_i X_i]| \ge t] \le \exp(-\frac{t^2}{2\sum_{x \in X} E[X_x^2] + 2Mt/3}) \]
\end{thm}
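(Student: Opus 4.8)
The plan is to prove Bernstein's inequality by the classical exponential-moment (Chernoff) method, treating each tail separately and combining at the end. Write $S = \sum_i X_i$ and $\mu = \E[S]$, and set $Y_i = X_i - \E[X_i]$, so that $\E[Y_i] = 0$ while the $Y_i$ remain independent and bounded. For any $\theta > 0$, Markov's inequality applied to the nonnegative variable $e^{\theta(S-\mu)}$ gives $\Pr[S - \mu \ge t] \le e^{-\theta t}\,\E[e^{\theta(S-\mu)}]$, and by independence $\E[e^{\theta(S-\mu)}] = \prod_i \E[e^{\theta Y_i}]$. The whole argument thus reduces to controlling the single-variable moment generating function $\E[e^{\theta Y_i}]$.

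The crux is this per-variable bound. Expanding the exponential and using $\E[Y_i] = 0$, I would write $\E[e^{\theta Y_i}] = 1 + \sum_{k \ge 2} \frac{\theta^k \E[Y_i^k]}{k!}$. Boundedness yields $|\E[Y_i^k]| \le M^{k-2}\,\E[Y_i^2]$ for every $k \ge 2$, so the series is bounded by $1 + \frac{\E[Y_i^2]}{M^2}\bigl(e^{\theta M} - 1 - \theta M\bigr)$. The key elementary estimate is $e^u - 1 - u \le \frac{u^2/2}{1 - u/3}$ for $0 \le u < 3$, which follows from the coefficientwise inequality $\frac{2}{(j+2)!} \le 3^{-j}$ applied to the Taylor series of $e^u - 1 - u$. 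Combining this with $1 + x \le e^x$ gives $\E[e^{\theta Y_i}] \le \exp\!\bigl(\frac{\theta^2 \E[Y_i^2]/2}{1 - \theta M/3}\bigr)$, valid whenever $0 \le \theta M < 3$.

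Setting $\sigma^2 := \sum_i \E[Y_i^2]$ and multiplying the factors, I obtain $\Pr[S - \mu \ge t] \le \exp\!\bigl(-\theta t + \frac{\theta^2 \sigma^2/2}{1 - \theta M/3}\bigr)$. I would then take $\theta = \frac{t}{\sigma^2 + Mt/3}$, which is admissible since it satisfies $\theta M < 3$; a direct substitution shows $1 - \theta M/3 = \sigma^2/(\sigma^2 + Mt/3)$, collapsing the exponent exactly to $-\frac{t^2}{2(\sigma^2 + Mt/3)} = -\frac{t^2}{2\sigma^2 + 2Mt/3}$. Finally, since $\E[Y_i^2] = \var(X_i) \le \E[X_i^2]$, I may replace $\sigma^2$ by $\sum_i \E[X_i^2]$ in the denominator; this only enlarges it and hence preserves the inequality, matching the stated form. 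Running the same computation with $-Y_i$ bounds the lower tail $\Pr[S - \mu \le -t]$ identically, and a union bound over the two one-sided events gives the two-sided statement (up to the usual factor of two, which is absorbed into the constant).

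The main obstacle I anticipate is bookkeeping in the per-variable step rather than anything conceptual: pinning down the moment bound $|\E[Y_i^k]| \le M^{k-2}\,\E[Y_i^2]$ with the correct boundedness constant. The centering $Y_i = X_i - \E[X_i]$ a priori only guarantees $|Y_i| \le 2M$, so reproducing the exact $2Mt/3$ term in the denominator requires either reading the hypothesis as saying the $X_i$ are already centered, or absorbing the stray factor through the sharper higher-moment estimate; I would flag this point explicitly. Once the MGF bound is established, everything downstream is a one-line optimization in $\theta$ together with the termwise verification of $\frac{2}{(j+2)!} \le 3^{-j}$.
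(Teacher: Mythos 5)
The paper never proves this statement: Bernstein's inequality is quoted in the appendix as a classical tool (it is then applied in the proofs of Theorems~1--3), so your argument has to be judged on its own merits rather than against a proof in the paper. Your skeleton is the canonical Chernoff/MGF route, and every downstream computation checks out: the coefficientwise verification of $e^u - 1 - u \le \frac{u^2/2}{1-u/3}$ on $[0,3)$ via $2 \cdot 3^j \le (j+2)!$, the admissibility of $\theta = t/(\sigma^2 + Mt/3)$ (indeed $\theta M < 3$), the identity $1 - \theta M/3 = \sigma^2/(\sigma^2 + Mt/3)$ collapsing the exponent exactly to $-t^2/(2\sigma^2 + 2Mt/3)$, and the enlargement of the denominator when replacing $\sum_i \var(X_i)$ by $\sum_i \E[X_i^2]$.

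The one genuine gap is the step you flagged but did not resolve: under the stated hypothesis $|X_i| \le M$ (uncentered), the moment bound $|\E[Y_i^k]| \le M^{k-2}\,\E[Y_i^2]$ is simply false. Take $X = M$ with probability $p$ and $X = -M$ with probability $1-p$: then $\E[Y^3] = 2M(1-2p)\,\E[Y^2]$, which tends to $2M\,\E[Y^2]$ as $p \to 0$; in general one only gets $(2M)^{k-2}\E[Y_i^2]$, and your argument then produces the denominator $2\sigma^2 + 4Mt/3$, not $2\sigma^2 + 2Mt/3$. The clean fix is to not center at all. With $\phi(u) := e^u - 1 - u$, the map $u \mapsto \phi(u)/u^2$ is nondecreasing on $\R$, so $X_i \le M$ gives pointwise $e^{\theta x} \le 1 + \theta x + \frac{x^2}{M^2}\phi(\theta M)$, hence $\E\bigl[e^{\theta X_i}\bigr] \le 1 + \theta\,\E[X_i] + \frac{\E[X_i^2]}{M^2}\phi(\theta M) \le \exp\bigl(\theta\,\E[X_i] + \frac{\E[X_i^2]}{M^2}\phi(\theta M)\bigr)$, and therefore $\E\bigl[e^{\theta(X_i - \E X_i)}\bigr] \le \exp\bigl(\frac{\E[X_i^2]}{M^2}\phi(\theta M)\bigr)$. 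This Bennett-style bound yields the second moments $\E[X_i^2]$ directly (making your final variance-to-second-moment relaxation unnecessary) and, combined with your estimate on $\phi$, recovers the exact constant $2Mt/3$ under the uncentered hypothesis. Separately, the two-sided union bound produces a prefactor $2$ that cannot be ``absorbed into the constant,'' since the exponent has no adjustable constant; the paper's statement indeed omits this factor (harmless for its constant-probability applications, where $s$ is chosen to make the exponent at most $-1$), but a self-contained proof should either state the bound as $2\exp(\cdot)$ or claim only the one-sided tail.
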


\subsection{Proof for the non-adaptive case}
\label{appendix:proof_non_adaptive}

\begin{proof}[Proof of \cref{thm:nonAdaptive}]
We denote for simplicity $R  = \sup_{e \in \calD} \ell(e)$. The upper-bound is a simple application of Bernstein's inequality and is included for completeness.

The algorithm chooses successively $s$ uniformly random samples $S_1, ..., S_s$ from $\calD$ (with replacement) and gives each sampled element weight $n/s$.
 % Let  be the successive random samples, and l
  Let $X_i = w(S_i) \ell(S_i)$. It holds that $\E[X_i]= \frac{n}{s} \sum_{e \in 
 \cal D} \frac{\ell(e)}{n} = \frac{\sum_{e \in \calD} \ell(e)}{s}$, and, thus, $\E[\sum_{e \in S} w(e) \ell(e)]= \E[\sum X_i] = \sum_{e \in \calD} \ell(e)$.

  To show that this sum of random variables is  concentrated, we aim at applying Bernstein's inequality. For this, we need to bound $\E[X_i^2]$: we have
  \begin{align*}
      \E[X_i^2] &= \sum_{e \in \calD} \left(\frac{n}{s}\ell(e)\right)^2 \Pr[e = S_i] 
      \leq \sum_{e \in \calD} \frac{n}{s^2} \ell(e)^2 
      \leq \frac{n^2}{s^2} R^2.  
      \end{align*}
Summed over all $i$, we therefore have $\sum_{i=1}^s \E[X_i^2] \leq n^2 R^2 / s$. Furthermore, for any $i$, $|X_i| \leq \frac{n}{s}R$ with probability 1.
Plugging this result into Bernstein's inequality yields
\begin{align*}
        \Pr[\Delta(S) \ge \eps n R] &=
        \Pr\left[\left|\sum_{i} X_i - E[\sum_i X_i]\right| \ge \eps n R\right] 
        \le \exp\left(-\frac{\eps^2 n^2 R^2 \cdot s}{2n^2 R^2+ 2nR \cdot \eps n R / 3}\right)\\
        &\leq \exp(-\eps^2 \cdot s / (2+\epsilon)).
\end{align*}

With $s = O(1/\eps^2)$, this gives the desired probability bound.

For the lower bound, consider a multiset $\calD \subset \R$ with $n/2$ copies of $-1$ and $n/2$ copies of $1$, and $\ell$ being the identify function $\ell(x) = x$, implying that $\sum_{e \in \calD} \ell(e) = 0$. Then, the estimator is a sum of Rademacher random variables, and anti-concentration bound states that for any fixed value $x$, $\sum_{s \in S} \ell(s) = x$ with probability at most $O(1/\sqrt{|S|})$ \cite{littlewood1939number}. Therefore, for some constant $c$ (which depends on the previous big-O constant), $\Pr[|\sum_{s \in S} \ell(s)| \geq c \sqrt{|S|}] \geq 1/2$.  Multiplying with $n/|S|$, this implies that our estimator is bigger than $\frac{c n}{\sqrt{|S|}}$ with probability at least 1/2. When $|S| \leq 1/\eps^2$, this gives the desired statement.
\end{proof}

\subsection{Proof of Theorem~\ref{thm:main}}
\label{appendix:proof_thm_mainPrecise}

\begin{proof}
    We prove that \cref{alg:oneRound} yields the 
    desired result. Let $S_1, ..., S_s$ be the successive random samples, and define $X_i = \ell(S_i) w(S_i)$. 

    % The Bernstein inequality (see \cref{thm:bernstein}) implies, for $t = \sqrt{2s \sum_{i=1}^s \E[X_i^2]} + 2 \eps Ms/3$,
    % \begin{align*}
    % &Pr\left[\Delta(S) > \eps t\right] \\
    % \le 
    % &\exp\left(\frac{-\eps^2 t^2}{
    % 2\sum_{i=1}^s \E[X_i^2] +  2 M \eps t /3}\right)\\
    % \le & \exp\left(-\eps^2 s\right).
    % \end{align*}

    % With our choice of $s$, this is at least $??$ \ds{formalize}. 

     We combine Bernstein's inequality (see \cref{thm:bernstein}) with \cref{ass:lip} to bound the value of $t$. Recall that $\cost^\Lambda_{\calC, z}(\calD) := \sum_{i=1}^k \Lambda_i \cost_z(C_i, c_i)$; and when $e \in C_i$ we let $\Lambda_e = \Lambda_i$
    
    From \cref{ass:lip}, we get that for $e \in C_i$, $\ell(e) \leq \hat{\ell}(e) + \Lambda_i v(e)$ and $\hat{\ell}(e) \leq \ell(e) + \Lambda_i v(e)$. Therefore, 
    \begin{align*}
        E[X_i^2] &= \sum_{e \in \calD} (\ell(e) w(e))^2 \cdot p_e = \sum_{e \in \calD} \ell(e)^2 \frac{1}{p_e s^2} \\
        &= \frac{1}{s^2}\cdot \sum_{e \in \calD} \ell(e)^2 \cdot \frac{\cost^\Lambda_{\calC, z}(\calD) + \sum_{x \in \calD} \hat{\ell}(x)}{\hat{\ell}(e) + \Lambda_e v(e)}\\
        &\leq \frac{1}{s^2}\cdot \sum_{e \in \calD} \ell(e) \cdot (\hat{\ell}(e) + \Lambda_e v(e)) \cdot \frac{\cost^\Lambda_{\calC, z}(\calD) + \sum_{x \in \calD} \hat{\ell}(x)}{\hat{\ell}(e) + \Lambda_e v(e)}\\
        &= \frac{1}{s^2}\cdot \sum_{e \in \calD} \ell(e)  \cdot \left(\cost^\Lambda_{\calC, z}(\calD) + \sum_{x \in \calD} \hat{\ell}(x)\right)\\
        &\leq \frac{1}{s^2}\cdot \left(\sum_{e \in \calD} \ell(e) \right) \cdot \left(2\cost^\Lambda_{\calC, z}(\calD) + \sum_{x \in \calD} \ell(x)\right)\\
        &\leq \frac{1}{s^2}\cdot \left(2\cost^\Lambda_{\calC, z}(\calD)+ \sum_{e \in \calD} \ell(e) \right)^2.       
    \end{align*}
    We also let $M := \max_{e \in \calD} \ell(e) w(e) = \max_e \ell(e) \cdot \frac{\cost^\Lambda_{\calC, z}(\calD) + \sum_x \hat{\ell}(x)}{s(\hat{\ell}(e) + \Lambda_e v(e))}$.
    Similarly,  we have $M \le \frac{1}{s}\left(\cost^\Lambda_{\calC, z}(\calD) + \sum_{e \in \calD} \hat \ell(e) \right) \le  \frac{1}{s}\cdot\left( 2 \cost^\Lambda_{\calC, z}(\calD) + \sum_{e\in \calD} \ell(e) \right)$.

    % Therefore, we get that with probability $99/100$, $t \leq 4 \cost^\Lambda_{\calC, z}(\calD) + 2\sum_{e\in \calD} \ell(e)$: by union-bound, we conclude that 
    Applying Berstein's inequality with $t = 2 \cost^\Lambda_{\calC, z}(\calD) + \sum_{e\in \calD} \ell(e)$ therefore yields:
    \begin{align*}
    %     &Pr\left[\Delta(S) > 2\eps \left( 2 \cost^\Lambda_{\calC, z}(\calD) + \sum_{e\in \calD} \ell(e)\right)\right] \\
    % \le 
    % &\exp(-1).
    %
    \Pr\left[\Delta(S) > \eps t\right] &\le 
    \exp\left(-\eps^2 \frac{t^2 \cdot s}{
    2t^2 + 2\eps t^2 /3}\right)
    = \exp\left(-\eps^2 \frac{s}{
    2 + 2\eps /3}\right) \le \exp(-1),\\
    %&\le \exp(-\eps^2 \cdot s)%\\
    % &\le \exp(-\eps^2 \frac{(10\lambda \cost(X) + \sum_{x} \ell(x))^2}{
    % \eps^{-2} (\sum_{x} \ell(x) + 2\lambda \cost(X)) \sum_{x} \ell(x) })\\
    % &\le \exp(-\eps^2 
    % \frac{(10\lambda \cost(X) + \sum_{x} \ell(x))^2}{
    % \eps^{-2} (\sum_{x} \ell(x))^2 + 2\lambda \cost(X) \sum_{x} \ell(x) })\\
    % &\le \exp(-2),
    \end{align*}
    where the last inequality holds by the choice of $s = \lceil\eps^{-2} (2+2\eps/3)\rceil$. 
    % Since $s \geq 1/\eps^2$, this gives constant probability as desired.
\end{proof}

\subsection{Adaptive Active learning}
\label{app:rounds}

We present here the algorithm used to prove \Cref{thm:rounds}. The proof follows directly from the proof for \Cref{thm:main}.
\begin{algorithm}
\caption{Adaptive-active-learning($\calD, r, \lambda, \eps$)}
\label{alg:rRound}
\begin{algorithmic}[1]
\STATE Compute a $O(1)$-approximation to prefix-$z$-clustering on $\calD$, i.e., an
ordering of the points in $\calD$ such that any prefix of length $k_0$ is an
$O(1)$-approximation to $(k_0, z)$-clustering on $\calD$.
\FOR{each round $i = 1, ..., r$}
\STATE  query $\ell$ for the points at position in $[(i-1)k + 1, ik]$
in the ordering, and 
define $\calA$ to be the set of elements at position at most $ik$ in the 
ordering.
\STATE For $e \in \calD$, define $\calA(e) = \argmin_{a \in \calA} \|e-a\|$ the element of $\calA$ 
    that is the closest to $x$,  $\hat{\ell}(e) := \ell(\calA(e))$ and 
    $v(e) := \|e-\calA(x)\|$.
\STATE Define $p_e := \frac{\hat{\ell}(e) + \lambda v(e)}{\lambda \cost_z(\calD, \calA) + \sum_{x \in \calD} \hat{\ell}(x)}$.
\STATE Let $s:= \lceil\eps^{-2} (2+2\eps/3)\rceil$. For $e \in \calD$ define $p_e := \frac{\hat{\ell}(e) + \lambda v(e)}{\lambda \cost_z(\calD, \calA) + \sum_{x \in \calD} \hat{\ell}(x)}$ and $w_i(e) = s^{-1} p_e^{-1}$.
\STATE Compute a sample $S_i$ of $s$ points, picked independently following the distribution $p_e$. 
\ENDFOR
\STATE \textbf{Output:} $S_i, w_i$ for each round $i$.
\end{algorithmic}
\end{algorithm}

\subsection{The parameter $\Lambda$}\label{app:lambda}
\begin{proof}[Proof of \Cref{lem:upperBoundLambda}]
    We have, for $t$ points $x_1, ..., x_t$ uniformly at random in $C_i$:
    \begin{align*}
        \Pr\left[\max_{j} \frac{|\ell(x_j)-\ell(c_i)|}{\|x_j-c_i\|^z} \in [\Lambda_i/\log(n), \Lambda_i]\right] \geq 1-(1-p)^t.
    \end{align*}
    With $t = \log(100 k) / \log(1-p)$, the maximum of the estimate $\frac{|\ell(x_j)-\ell(c_i)|}{\|x_j-c_i\|^z}$ is in $[\Lambda_i/\log(n), \Lambda_i]$ with probability at least $1-1/(100k)$. Multiplying $\max_{j} \frac{|\ell(x_j)-\ell(c_i)|}{\|x_j-c_i\|^z} $ by $\log(n)$ thus yields our upper bound on $\Lambda_i$.
\end{proof}

\subsection{Regression}\label{app:regression}

To show \cref{thm:regression}, we first prove that for any fixed $x \in \calX$, the desired bound hold with probability $1-\exp(-\eps^2 s)$. It is standard to extend this result to all $x \in \calX$, using discretization techniques to find a set $N$ of size $\eps^{-O(d)}$ such that preserving the cost for all vectors in $N$ is enough to extend the result for all candidate $x$ ($N$ is called a net for $\calX$). Hence, it is enough to show the following lemma:

\begin{lem}\label{lem:regression}
Let $\Lambda \in \R^k$ with $\lambda_i \geq 1$, and $A$ and $b$ that respects \cref{ass:regression} with constant $\zeta$ and $\hat a_i$ and $x_0$ as computed by \cref{alg:regression}. 

    Let $x\in \calX$, namely $x$ such that $\|x\|_2 = O(1)$ and there is some $\zeta \geq 1$ such that $\forall j \in C_i, |\langle \hat a_j, x-x_0\rangle | \leq \Lambda_i \|a_j - \hat a_j\|_2$.
    Then, with probability $1-\delta$, it holds that for $s = 8\eps^{-2} \log(1/\delta)$,
    $$\left|\sum_{s \in S} w(s) (\langle a_s, x\rangle  - b_s)^2 - \|Ax-b\|_2^2\right| \leq \eps (\|Ax-b\|_2^2 + \sum_{i \in [k]} \Lambda_i \cost_{1, 1}(C_i)))$$
\end{lem}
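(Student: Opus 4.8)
The plan is to run the sensitivity-sampling analysis of \cref{thm:main} essentially verbatim, now with the per-point cost $f_i(x) := (\langle a_i, x\rangle - b_i)^2$ playing the role of $\ell$. Writing $S_1,\dots,S_s$ for the independent draws from $p$ and $X_j := w(S_j) f_{S_j}(x)$, unbiasedness is immediate from $w(i) = s^{-1} p_i^{-1}$: $\E[\sum_j X_j] = \sum_i f_i(x) = \|Ax - b\|_2^2$. I would then apply Bernstein's inequality (\cref{thm:bernstein}) with deviation $t = \eps T$, where $T := \|Ax-b\|_2^2 + \cost^\Lambda_{\calC, 1}(\calD)$. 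As in \cref{thm:main}, both ingredients Bernstein needs -- the uniform bound $M = \max_i w(i) f_i(x)$ and the second-moment sum $\sum_j \E[X_j^2]$ -- reduce to a single ``bounded sensitivity'' estimate $f_i(x)/p_i = O(T)$, so the whole proof hinges on establishing that estimate.

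The heart of the argument is to show the numerator of $p_i$ dominates $f_i(x)$, i.e. $f_i(x) \le O(1)\,(\Lambda_i \|a_i - \hat a_i\|_2 + v(a_i, x_0))$, where I abbreviate by $\Lambda_i$ the parameter of the cluster containing row $i$ and recall $v(a_i,x_0) = (\langle \hat a_i, x_0\rangle - \hat b_i)^2$. I would decompose
\[
\langle a_i, x\rangle - b_i = (\langle \hat a_i, x_0\rangle - \hat b_i) + \langle a_i - \hat a_i, x\rangle + \langle \hat a_i, x - x_0\rangle - (b_i - \hat b_i),
\]
and bound the three error terms separately: $|\langle a_i - \hat a_i, x\rangle| \le \|a_i - \hat a_i\|_2\,\|x\|_2 = O(\|a_i - \hat a_i\|_2)$ since $\|x\|_2 = O(1)$; $|\langle \hat a_i, x - x_0\rangle| \le \Lambda_i \|a_i - \hat a_i\|_2$ by membership of $x$ in $\calX$; and $|b_i - \hat b_i| \le \Lambda_i \|a_i - \hat a_i\|_2$ by \cref{ass:regression} (with $z = 1$). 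Squaring and using $(\alpha + \beta)^2 \le 2\alpha^2 + 2\beta^2$ gives $f_i(x) \le 2 v(a_i, x_0) + O(1)\,\Lambda_i^2 \|a_i - \hat a_i\|_2^2$. Here I would invoke the boundedness $\|a_i\|_2 = O(1)$ (so $\|a_i - \hat a_i\|_2^2 = O(\|a_i - \hat a_i\|_2)$) together with the $\Lambda_i$ being constants that are at least $1$ (so $\Lambda_i^2 = O(\Lambda_i)$) to collapse the squared term down to $O(\Lambda_i \|a_i - \hat a_i\|_2)$, yielding exactly $f_i(x) \le O(1)(\Lambda_i \|a_i - \hat a_i\|_2 + v(a_i, x_0)) = O(1)\, p_i\, D$, with $D$ the denominator of $p_i$.

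It then remains to bound $D = \cost^\Lambda_{\calC,1}(\calD) + \sum_i v(a_i, x_0)$ by $O(T)$. The clustering term is already part of $T$, so I only need $\sum_i v(a_i, x_0) = O(T)$. Since $x_0$ is the optimal regression on the (cluster-weighted) center dataset, $\sum_i v(a_i, x_0) \le \sum_i (\langle \hat a_i, x\rangle - \hat b_i)^2$ for the particular $x$ at hand; a triangle inequality identical in spirit to the one above -- now bounding $\langle \hat a_i, x\rangle - \hat b_i$ against $\langle a_i, x\rangle - b_i$ plus the same two $O(\Lambda_i\|a_i - \hat a_i\|_2)$ error terms -- gives $\sum_i (\langle \hat a_i, x\rangle - \hat b_i)^2 \le 2\|Ax-b\|_2^2 + O(1)\cost^\Lambda_{\calC,1}(\calD) = O(T)$. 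Combining the two reductions, $f_i(x)/p_i = f_i(x)\, D / (\Lambda_i \|a_i - \hat a_i\|_2 + v(a_i, x_0)) = O(T)$ for every $i$.

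Finally I would feed this into Bernstein: $M \le O(T)/s$ and $\sum_j \E[X_j^2] = s^{-1}\sum_i f_i(x)^2/p_i \le s^{-1}\,O(T)\sum_i f_i(x) \le s^{-1} O(T^2)$, so with $t = \eps T$ the exponent becomes $-\Omega\!\big(\eps^2 T^2 / (T^2/s)\big) = -\Omega(\eps^2 s)$, which is at most $\delta$ for $s = 8\eps^{-2}\log(1/\delta)$. The main obstacle is the pair of triangle-inequality reductions together with the boundedness and constant-$\Lambda$ bookkeeping that turns squared cluster distances into the first-power $k$-median cost $\cost^\Lambda_{\calC,1}$; this is precisely where the restriction of $x$ to $\calX$ and the assumption $\Lambda_i \ge 1$ are essential, since without them the estimator's sensitivity could not be charged to $T$.
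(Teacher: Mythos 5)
Your proposal is correct and follows essentially the same route as the paper's proof: establish the sensitivity bound $(\langle a_i,x\rangle-b_i)^2 \le O(\Lambda_i\|a_i-\hat a_i\|_2 + v(a_i,x_0))$ by passing through the cluster center and through $x_0$, bound $\sum_i v(a_i,x_0)$ via the optimality of $x_0$ on the center dataset, and conclude with Bernstein. The only (immaterial) difference is that the paper obtains the first-power bound $O(\Lambda_i\|a_i-\hat a_i\|_2)$ directly via the difference-of-squares factorization $(u^2-v^2)=(u+v)(u-v)$ with the $(u+v)$ factor bounded by $O(1)$, whereas you square the additive decomposition and then use $\|a_i-\hat a_i\|_2=O(1)$ and $\Lambda_i=O(1)$ to collapse $\Lambda_i^2\|a_i-\hat a_i\|_2^2$ to $\Lambda_i\|a_i-\hat a_i\|_2$; both are valid under the stated boundedness and constant-$\Lambda$ assumptions.
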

\begin{proof}
    Let $S_1, ..., S_s$ be the successive random samples, and define $X_t = w(S_t) (\langle a_{S_t}, x\rangle  - b_{S_t})^2$.
    By choice of $w$, it holds that $\E[\sum X_t] = \|Ax-b\|_2^2$. We will show concentration using the Bernstein inequality. 

    We denote for simplicity $\cost_\Lambda(A) = \sum_{i \in [k]} \Lambda_i \cost_{1,1}(C_i)$. For $j \in C_i$, we let $\tilde \Lambda_j = \Lambda_i$.
    
    We first focus on bounding the second moment of $X_t$. We have:
    \begin{align*}
        \E[X_t^2] &= \sum_{i=1}^n \frac{(\langle a_i, x\rangle  - b_i)^4}{s^2 p_i}\\
        &=  \frac{1}{s^2} \cdot \sum_{i=1}^n (\langle a_i, x\rangle  - b_i)^4 
        \cdot \frac{\sum_{j \in [n]} \tilde \Lambda_j \|a_j - \hat a_j\| + v(a_{j}, x_0)}{\tilde \Lambda_i \|a_i - \hat a_i\|  + v(a_i, x_0)}
    \end{align*}

    To bound this term, we first note that
    \begin{align}
        \notag
        &(\langle a_i, x\rangle  - b_i)^2 - (\langle \hat a_i, x\rangle  - \hat b_i)^2 \\
        \notag 
        = & (\langle a_i + \hat a_i, x\rangle  - b_i - \hat b_i)(\langle a_i - \hat a_i, x\rangle  - b_i + \hat b_i)\\
        \notag 
        = &\langle a_i + \hat a_i, x\rangle \langle a_i - \hat a_i, x\rangle + \langle a_i + \hat a_i, x\rangle  \cdot (\hat b_i - b_i) - (b_i + \hat b_i) \langle a_i - \hat a_i, x\rangle  - (\hat b_i + b_i) (\hat b_i - b_i)\\
        \notag
        \leq &\|a_i + \hat a_i\| \cdot \|a_i - \hat a_i\| \cdot \|x\|^2 + \|a_i + \hat a_i\| \cdot \| x\| |\hat b_i - b_i| - |b_i + \hat b_i| \cdot \|a_i - \hat a_i\|\cdot \|x\| + |b_i + \hat{b}_i| \cdot |b_i - \hat{b}_i| \\
        \label{eq:atocenter}
        =& O(\tilde \Lambda_i \|a_i - \hat a_i\|),
    \end{align}
    where the last two lines follow from Cauchy-Schwarz and \cref{ass:regression}.
    
We now relate this to the term $v(a_i, x_0) = (\langle \hat a_i, x_0\rangle  - \hat b_i)^2$ of the denominator:
\begin{align*}
    (\langle \hat{a}_i, x\rangle  - \hat{b}_i)^2 - (\langle \hat{a_i}, x_0\rangle  - \hat{b}_i)^2 &= \langle \hat a_i, x-x_0 \rangle \cdot ( \hat a_i, x-x_0 \rangle + 2\hat b_i)\\
&= \langle \hat{a}_i, x + x_0\rangle \langle \hat{a}_i, x - x_0\rangle  - 2\hat{b}_i \langle \hat{a}_i, x-x_0\rangle \\
&=O(|\langle \hat{a_i}, x-x_0\rangle|) = O(\tilde \Lambda_i \|a_i - \hat a_i\|),
\end{align*}
where the last line uses our assumption $|\langle \hat{a_i}, x-x_0\rangle| \leq \tilde \Lambda_i \|a_i - \hat a_i\|$.
    Thus, combining those equations:
    \begin{align*}
        (\langle a_i, x\rangle  - b_i)^2  &\leq O(\tilde \Lambda_i \|a_i - \hat a_i\|) + (\langle \hat{a_i}, x_0\rangle  - \hat{b}_i)^2 + O(\tilde \Lambda_i \|a_i - \hat a_i\|)\\
&= O(\tilde \Lambda_i \|a_i - \hat a_i\| + v(a_i, x_0)).
    \end{align*}

    Thus, we can now finish our bound on the second moment of $X_t$: 
    \begin{align*}
     \E[X_t^2] &=  \frac{1}{s^2} \cdot \sum_{i=1}^n (\langle a_i, x\rangle  - b_i)^4 \cdot \frac{ \sum_{j \in [n]} \tilde \Lambda_i \|a_j - \hat a_j\| + v(a_j, x_0)}{\tilde \Lambda_i \|a_i - \hat a_i\|  + v(a_i, x_0)}\\
    &\leq \frac{1}{s^2} \cdot \sum_{i=1}^n (\langle a_i, x\rangle  - b_i)^2 \cdot  O(\tilde \Lambda_i) \|a_i - \hat a_i\| + v(a_i, x_0)) \cdot \frac{ + \sum_{j \in [n]} \tilde \Lambda_j \|a_{j} - \calA(a_{j})\| + v(a_{j})}{\tilde \Lambda_i \|a_i - \hat a_i\|  + v(a_i, x_0)}\\
     &\leq \frac{\sum_{j \in [n]} \tilde \Lambda_j \|a_j - \calA(a_j)\| + v(a_j, x_0)}{s^2} \cdot \sum_{i=1}^n O\left((\langle a_i, x\rangle  - b_i)^2\right)
    \end{align*}

    Using the same upper bounds, we get that for all $i$,
    \begin{align*}
        w(i) (\langle a_{i}, x\rangle  - b_{i})^2 &= \frac{1}{s} \cdot (\langle a_i, x\rangle  - b_i)^2 \cdot \frac{\sum_{j \in [n]} \tilde \Lambda_j \|a_{j} - \calA(a_{j})\| + v(a_{j}, x_0)}{\tilde \Lambda_i \|a_i - \hat a_i\|  + v(a_i, x_0)}\\
        &\leq \frac{\sum_{j \in [n]} \tilde \Lambda_j \|a_{j} - \calA(a_{j})\| + v(a_{j}, x_0)}{s}.
    \end{align*}

    Therefore, for $T =  \eps (\|Ax-b\|_2^2 + \cost_\Lambda(A) + \sum_{i} v(a_i, x_0))$ we get that $2\sum_t \E[X_t^2] \leq T^2 / s$, and with probability $1$ each $X_t$ is verifies $2|X_t|T/3 \leq T^2/s$.
    Furthermore, using \cref{eq:atocenter} and the optimality of $x_0$ for the dataset $\{\hat a_1, ..., \hat a_n\}$, we have that: 
    \begin{align*}
        \sum_i v(a_i, x_0) &= \sum_i (\langle \hat a_i, x_0\rangle - \hat b_i)^2 \leq \sum_i (\langle \hat a_i, x\rangle - \hat b_i)^2\\
        &\leq  \sum_i (\langle  a_i, x\rangle - b_i)^2 + O(\tilde \Lambda_i) \|a_i - \hat a_i\| = \|Ax-b\|_2^2 + O(\cost_\Lambda(A)).
    \end{align*}
    Hence, the Bernstein inequality ensures that
    \begin{align*}
        &\Pr\left[\left|\sum_{e \in S} w(e) (\langle a_e, x\rangle  - b_s)^2 - \|Ax-b\|_2^2\right| \geq \eps (\|Ax-b\|_2^2 + \cost_\Lambda(A)))\right] \\
        \leq &\Pr\left[\left|\sum_{e \in S} w(e) (\langle a_e, x\rangle  - b_e)^2 - \|Ax-b\|_2^2\right| \geq \eps/2 \cdot (\|Ax-b\|_2^2 + \cost_\Lambda(A) + \sum_{i} v(a_i, x_0))\right] \\
        \leq &\exp\left(-\eps^2 s / 8\right).
    \end{align*}
    Therefore, using that $s = 8\eps^{-2} \log(1/\delta)$ concludes the lemma.
\end{proof}

\section{More experimental details}

\subsection{Fine-Tuning LLMs}

\paragraph{Model Details.}\label{apndx:llm-model-details} For our fine-tuning task, we display the model hyperparameters in the table below. We used a batch size of 128, a constant learning rate of 0.001, and dropout of 0.1. We fine-tune from the T5-Small \cite{t5paper} pre-trained model that was pre-trained for 1M steps. 

\begin{table}[ht]
\centering
\begin{footnotesize}
\begin{tabular}{ccc}
\textbf{Model Name} & \textbf{T5-Small} \\
\toprule
embedding dim & 512 \\
number of heads & 6 \\
enc./dec. layers & 8 \\
head dim & 64 \\
mlp dimension & 1024 \\
\midrule
number of parameters & 77M \\\bottomrule
\end{tabular}
\end{footnotesize}
\caption{Dimensions of T5 Small}
\label{tab:models_dim1}
\end{table}

\paragraph{Hölder Constant.}\label{apndx:llm-holders} We experimented with varying the Hölder constant from 0.05 to 500. Each run used BERT embeddings for clustering (as above), and sampled roughly 1\% of the whole dataset. We found that raising the constant too high (e.g., 500) resulted in a drop in quality and final accuracy more equivalent with diversity sampling (figure \ref{fig:en-de-lipshitz}).

\begin{figure}
\includegraphics[width=0.49\linewidth]{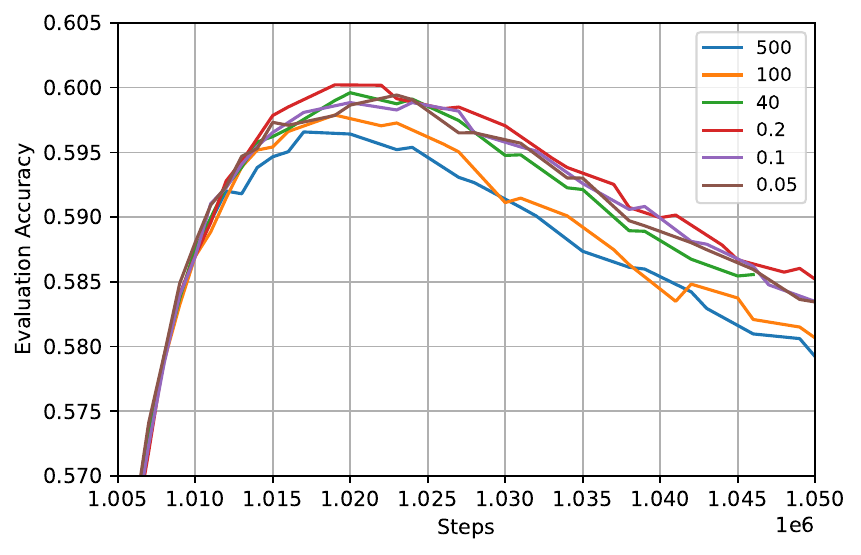}
\includegraphics[width=0.49\linewidth]{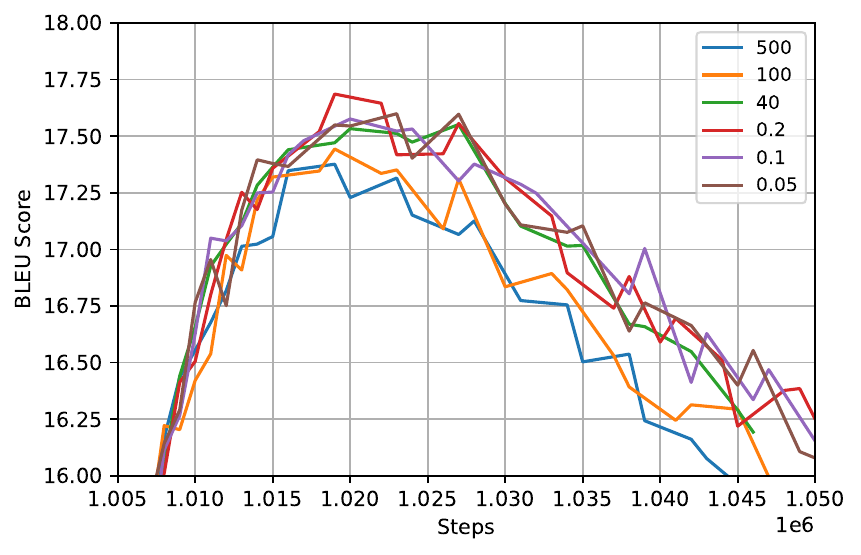}
\caption{We 
report the accuracy (left) and BLEU score (right) with differing Hölder constants from 0.05 to 500. Each run uses roughly 1\% of the whole
dataset.}
\label{fig:en-de-lipshitz}
\end{figure}

\subsection{Classification Tasks}\label{app:classification}

\paragraph{Datasets and models.} We largely follow the dataset and model setup used in DISTIL. The datasets
we consider are MNIST, Fashion MNIST and CIFAR10. For the first
two we use a neural network with one $128$-dimensional hidden layer, and for
the last one we use convolutional neural network with three convolutional 
layers and three dense layers. We train each model for $10$ epochs, a 
batch size of $32$, and use Adam optimizer with a learning rate of $10^{-3}$.

\paragraph{Runtime comparison.}

In Figure~\ref{fig:runtimes}, we show a comparison between the runtimes
of our loss- and gradient-based algorithms, and the coreset algorithm
of~\cite{SenerS18}. All algorithms were implemented in python using the tensorflow framework 
and the runtime calculation experiments
ran on CPU, on a cloud VM with 24 CPUs and 100GB of RAM.
It should be noted that a significant advantage of our loss- and
gradient-based sampling is that they can rely on a pre-computed
metric and clustering that is not updated during the sampling
process. In most applications, this will be a fixed metric
generated by an upstream model, that is easy to generate and
compute distances. As a result, most of the runtime will be spent
running model inferences at the cluster center points.

\begin{figure}
\includegraphics[width=0.49\linewidth]{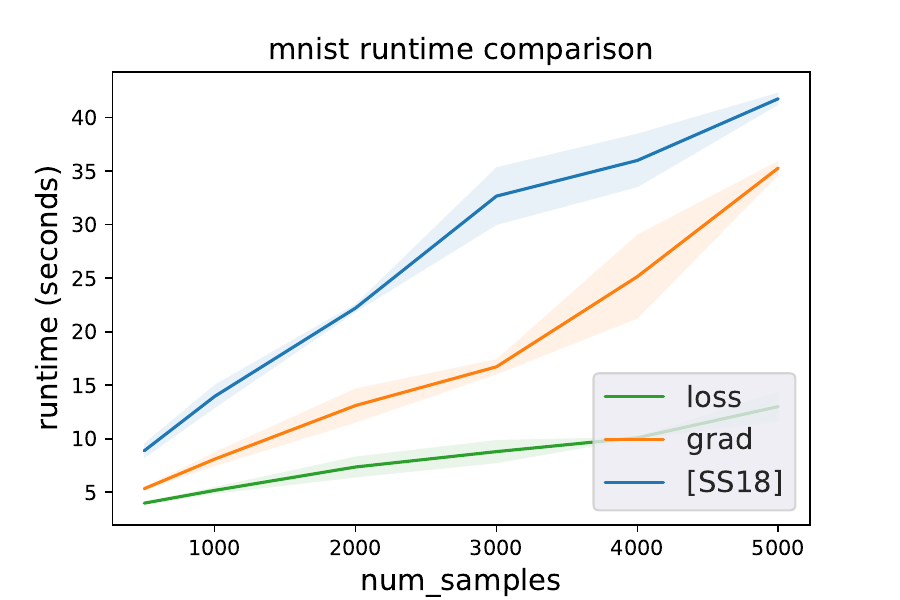}
\includegraphics[width=0.49\linewidth]{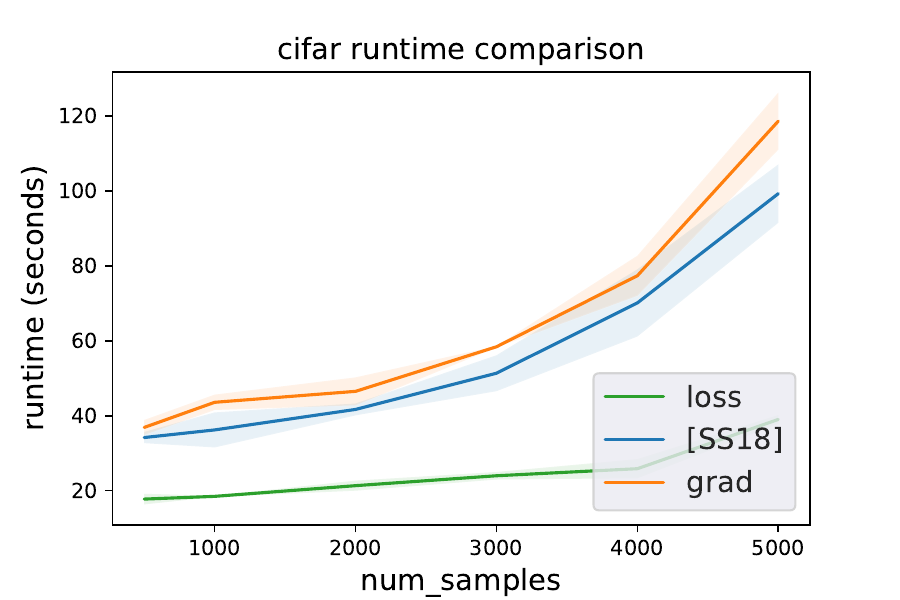}
\caption{We present runtime comparisons between different algorithms, for MNIST
and CIFAR10. The results for Fashion MNIST are analogous to those of MNIST, since
the model and dataset have the same size.}
\label{fig:runtimes}
\end{figure}

\paragraph{Algorithms for data selection.} We list some of the best-performing algorithms in literature, and mention their advantages and
disadvantages.
Out of these, margin and entropy sampling are the top performing methods in
the DISTIL\footnote{\url{https://github.com/decile-team/distil}} benchmark.%\todo{Q: If it is a benchmark, why don't we use it to compare our algorithms to the other? A: Their results are for active
%learning (not 1-round selections like ours). Also their 
%implementations are in pytorch while ours are in tensorflow.}
\begin{itemize}
\item Uniform sampling: We uniformly sample data points up to 
the budget $k$. This is the simplest and fastest way to sample $k$ data points.
\item {Margin/Least confidence/Entropy sampling: These methods
aim to select the examples with the lowest confidence.
Specifically, if $p_1,\dots,p_C$ are the per-class output 
probabilities of the model, we select the data points that either minimize $\max_{i\in [C]} p_i$, minimize $p_{i^*} - \max_{i\in [C]\backslash \{i^*\}} p_i$, where $i^* = \mathrm{argmax}_{i\in [C]} p_i$, or maximize the entropy $-\sum_{i=1}^C p_i\log p_i$.
Unfortunately, these methods require an inference call for 
\emph{each} data point, in order to evaluate its the 
classification uncertainty, and so are not considered runtime
efficient.
}
\item {$k$-center CoreSet [SS18]: The $k$-center algorithm from~\cite{SenerS18}. This
algorithm does not require any model inferences, but instead 
requires maintaining a nearest-neighbor data structure under
insertions of data points.}
\end{itemize}
%\todo{Q: What implementation are we using? All from distil? A: We use our own implementations}

In Figure~\ref{fig:results_appendix} we provide more detailed experiments,
including multiple algorithms
from previous work. 

\begin{figure}[h]
\begin{center}
%\framebox[4.0in]{$\;$}
%\fbox{\rule[-.5cm]{0cm}{4cm} \rule[-.5cm]{4cm}{0cm}}
\includegraphics[width=0.49\linewidth]{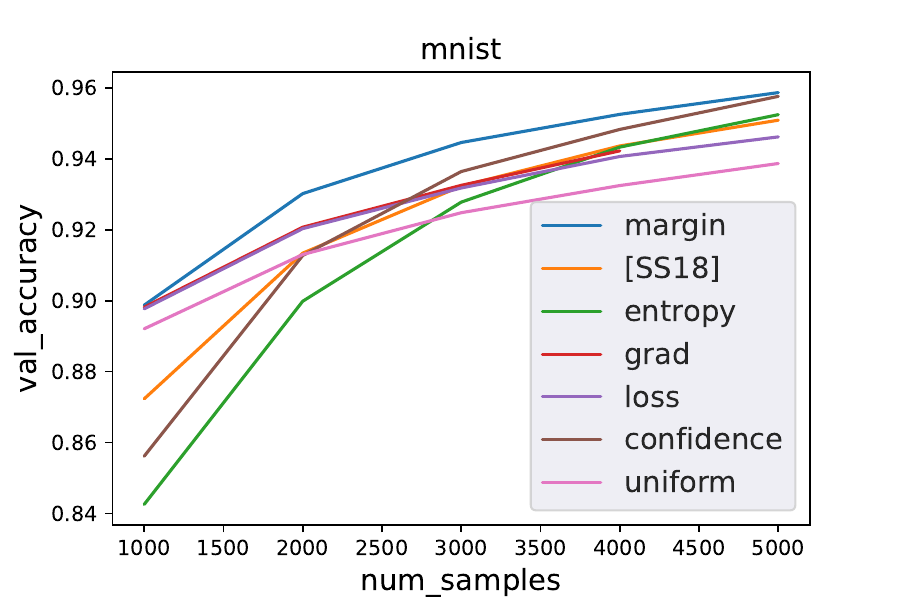}
\includegraphics[width=0.49\linewidth]{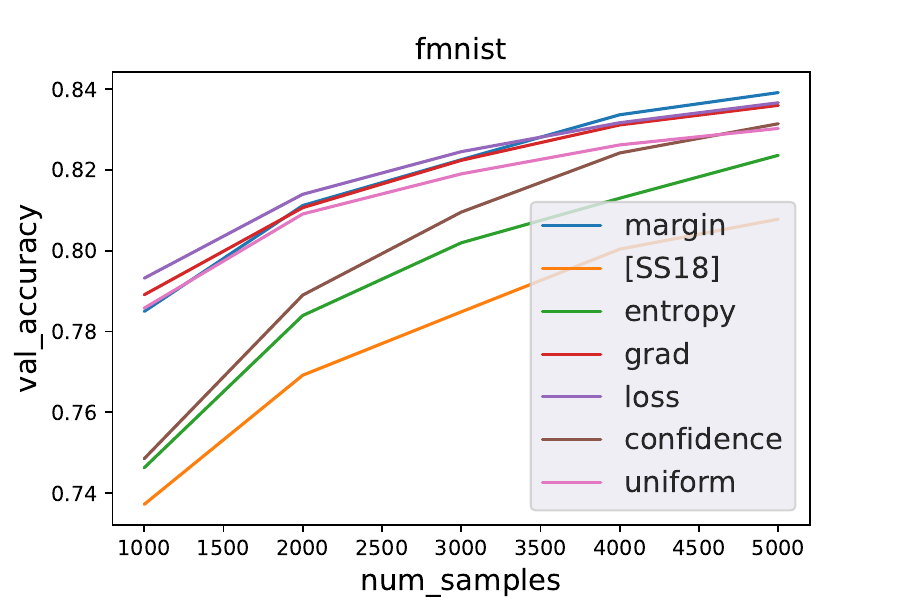}
\includegraphics[width=0.49\linewidth]{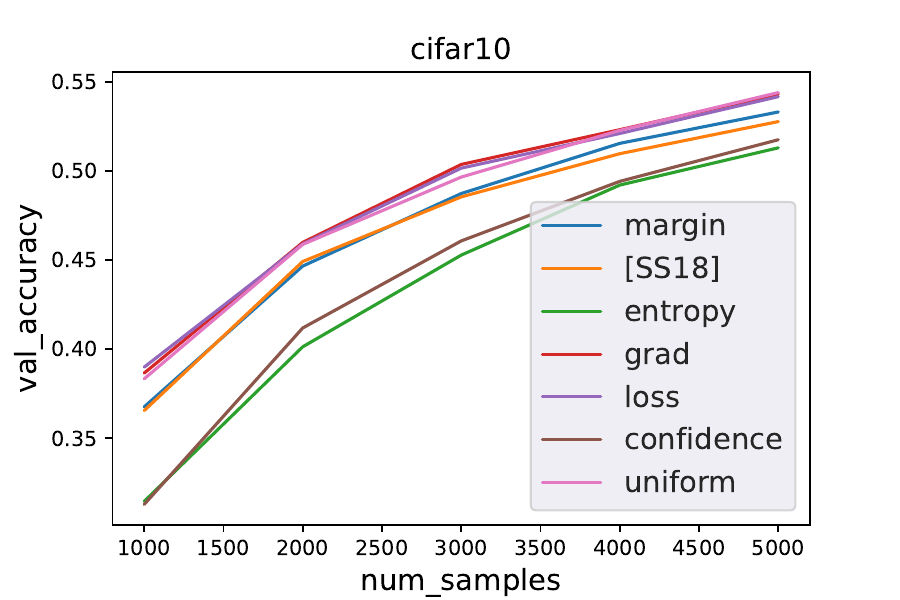}
\end{center}
\caption{The experimental results for different datasets and algorithms.
To minimize variation, we independently run each data point $100$ times.}
\label{fig:results_appendix}
\end{figure}

\subsection{Regression tasks}
\label{app:regression_algo_details}

\paragraph{Our algorithm.} We run \cref{alg:regression},
with a couple of differences: i) We run $k$-medoids, which is a
variant of $k$-median that is only allowed to pick input points as centers -- this
does not change the theoretical guarantees provided in the previous sections, 
and ii) we set $\Lambda_i\rightarrow\infty$ for all $i$,
which has the effect that we only look at distances and not losses.
We set the number of clusters to be $10\%$ of the total number of 
data points in the training set. After computing 
the regression solution, we evaluate it on the full training and 
validation datasets.

\end{document}